\newtheorem{thm}{Theorem}
\newtheorem{lemma}[thm]{Lemma}
\newtheorem{definition}[thm]{Definition}
\newtheorem{corollary}[thm]{Corollary}
\newcommand{\wait}{\boldsymbol{\circlearrowleft}}
\title{On the Completeness of Conflict-Based Search: \\ Temporally-Relative Duplicate Pruning}
\author{
Thayne T. Walker$^{1,2}$, Nathan R. Sturtevant$^3$ \\
$^1$University of Denver, Denver, USA 
$^2$Lockheed Martin Corporation, USA \\
$^3$Department of Computing Science, Alberta Machine Intelligence Institute (Amii), University of Alberta, Canada \\
thayne.walker@du.edu, nathanst@ualberta.ca
}
\begin{document}
\maketitle
\begin{abstract}
A well-known and often-cited deficiency of the Conflict-Based Search (CBS) algorithm for the multi-agent pathfinding (MAPF) problem is that it is incomplete for problems which have no solution; if no mitigating procedure is run in parallel, CBS will run forever when given an unsolvable problem instance. In this work, we introduce Temporally-Relative Duplicate Pruning (TRDP), a technique for duplicate detection and removal in both classic and continuous-time MAPF domains. TRDP is a simple procedure which closes the long-standing theoretic loophole of incompleteness for CBS by detecting and avoiding the expansion of duplicate states. TRDP is shown both theoretically and empirically to ensure termination without a significant impact on runtime in the majority of problem instances. In certain cases, TRDP is shown to increase performance significantly.
\end{abstract}

\section{Introduction}

The objective of multi-agent pathfinding (MAPF)~\cite{stern2019multi} is to find paths for multiple agents to move from their current configurations (or states) to goal states such that their respective paths do not conflict at any time. A \emph{conflict} occurs when agents' shapes overlap at the same time. In contrast to ``classic'' MAPF with actions of unit duration, real world domains often require the use of continuous-time, non-unit duration actions. In this paper, we seek optimal, conflict-free solutions to the MAPF problem in both classic MAPF and continuous-time MAPF, also known as MAPF\textsubscript{R}~\cite{walker2018extended}.

Optimal MAPF solvers are classified into two broad categories: \emph{coupled} and \emph{decoupled}. Coupled algorithms such as multi-agent A* (MA-A*)~\cite{ODID}, enhanced partial expansion A*~\cite{goldenberg2014epea} and M*~\cite{MSTAR} solve MAPF problems in a joint state space, where the joint states of all agents are aggregated into a single state. Decoupled algorithms such as CBS~\cite{CBS}, ICTS~\cite{ICTS}, CBICS~\cite{walker2021conflict}, branch-and-cut-and-price (BCP)~\cite{lam2019branch} and enhanced variants of these~\cite{gange2019lazy,MDDSAT,li2019disjoint} solve MAPF problems without aggregating the agents' state spaces together, or by partially aggregating only some of them. In general, decoupled algorithms have a lower practical computational complexity and hence are more popular. CBS is a popular MAPF algorithm. In this paper we focus on the completeness of CBS, but our approach (especially for duplicate detection) is applicable to most coupled and decoupled algorithms for MAPF\textsubscript{R}.

A long-standing problem for CBS is that it will run forever if given an unsolvable problem instance~\cite{CBS}. The suggested remedy for this is to run a sub-optimal, polynomial-time, complete algorithm~\cite{kornhauser1984coordinating,pps,okumura2022lacam} in parallel with the main solver. In the case that no solution exists, the polynomial-time algorithm will report this fact, and then CBS can be terminated. But this reliance on a second MAPF algorithm means that CBS is not natively complete. When using CBS for domains other than classic MAPF (e.g., MAPF\textsubscript{R}), it may be difficult to find or invent a separate, complete algorithm for the desired domain. At this time, no complete algorithms generally applicable to MAPF\textsubscript{R} are known. In summary, there is a need for making CBS \emph{natively} complete.

Temporally-relative duplicate pruning (TRDP) is a novel technique for CBS which can guarantee completeness for classic MAPF and MAPF\textsubscript{Q}, a subset of MAPF\textsubscript{R} with discrete rational values. TRDP renders otherwise infinite CBS search spaces finite by detecting and eliminating multi-agent ``loops''. TRDP can be applied to virtually any MAPF domain to guarantee completeness. Importantly, the theory of TRDP resolves a long-standing deficiency of CBS. Furthermore, our empirical results show that TRDP correctly detects and terminates on unsolvable MAPF problem instances while having only a small effect on the runtime in a representative set of (solvable) classic MAPF and MAPF\textsubscript{Q} instances.

\section{Problem Definition}
\label{sec:def}

MAPF was originally defined for a ``classic'' setting~\cite{stern2019multi} where the movements of agents are coordinated on a two dimensional grid, usually represented as a graph $G{=}(V,E)$. In classic MAPF, edges have a unit time duration and agents occupy a point in space. This paper uses the definition of MAPF\textsubscript{R}~\cite{walker2018extended}, a variant of MAPF for continuous time execution where $G$ is a weighted graph which (unlike grids) may be non-planar, meaning edges may intersect in areas other than vertices. Every vertex $v{\in}V$ has coordinates in a metric space and every edge $e{\in}E$ has a positive real-valued edge weight $w(e){\in}\mathbb{R}_+$. This includes self-directed edges for \emph{wait} actions.
MAPF\textsubscript{Q}$\subset$MAPF\textsubscript{R} uses only positive, rational-valued edge weights $w(e){\in}\mathbb{Q}_+$. Weights usually represent the times it takes to traverse edges, but cost and time duration can be treated separately. In MAPF there are $k$ agents and each agent is assigned a start and a goal vertex $V_s{=}\{start_1,..,start_k\}{\subseteq}V$ and $V_g{=}\{goal_1,..,goal_k\}{\subseteq}V$ such that $start_i{\neq} start_j$ and $ goal_i{\neq}goal_j$ for all $i{\neq}j$.

A \emph{solution} to a MAPF\textsubscript{R} (MAPF\textsubscript{Q}) instance is $\Pi{=}\{\pi_1,..,\pi_k\}$, a set of single-agent \emph{paths} composed of \emph{states}. A state $s{=}(v,t)$ is composed of a vertex $v{\in}V$ and a time $t{\in}\mathbb{R}_+$ ($t{\in}\mathbb{Q}_+$).
A single-agent path is a sequence of $d{+}1$ states $\pi_i{=}[s_i^0,..,s_i^d]$, where $s_i^0{=}(start_i,0)$ and $s_i^d{=}(goal_i,t_g)$ where $t_g$ is the time the agent arrives at its goal and all vertices in the path traverse edges in $E$. Costs and time steps are always monotonically increasing in a path because edge costs are always greater than zero.

Agents have a shape, (e.g., a circle or polygon), which is situated relative to a \emph{reference point}~\cite{li2019large}. Agents move along edges from a vertex to an adjacent vertex, and could use constant or variable velocity in the metric space. 
A \emph{conflict} happens when two agents perform actions (by either waiting or traversing edges) which results in their shapes overlapping simultaneously.  We seek a \emph{feasible solution}, which has no conflicts between any pairs of paths in $\Pi$. MAPF solvers typically optimize for minimum makespan or minimum flowtime.
Optimization of classic MAPF, MAPF\textsubscript{Q} and MAPF\textsubscript{R} is NP-hard~\cite{YuLaValle2013}.

\section{Background and Motivation}

Decoupled, search-based algorithms for MAPF\textsubscript{R} include sub-optimal ones~\cite{Silver05,direction,AASIPP,cohen2019optimal} and optimal ones~\cite{walker2018extended,andreychuk2019continuous,walker2020generalized,walker2021conflict,andreychuk2021improving,coppe2022conflict,walker2022dissertation}, but none of them will terminate when no solution exists for the problem instance.
\label{sec:motTRDP}
Recall that \emph{completeness} means that an algorithm is guaranteed to terminate in a finite amount of time. There are two parts to completeness~\cite{edelkamp2011heuristic}:
\begin{enumerate}
    \item Termination with a solution if one exists.
    \item Termination with $\emptyset$ if a solution does not exist.
\end{enumerate}

The proof for part 1 in classic MAPF and MAPF\textsubscript{R} has been shown repeatedly for CBS~\cite{CBS,li2019large,walker2020generalized}. Significantly, part 2 has been shown to be a problem for CBS and other decoupled algorithms~\cite{CBS,walker2022dissertation}. In fact, these algorithms will run forever given an unsolvable problem instance if another complete algorithm is not run in parallel. Even a simple unsolvable instance like the one shown above will cause CBS to run forever. 

Because agents can take actions such as waiting or moving back and forth without coming into conflict, the high-level search tree grows infinitely. The reason for this infinite growth is related to the inclusion of time in the state space. Time is typically part of the state space in MAPF\textsubscript{R} and decoupled algorithms, but not necessarily coupled classic MAPF algorithms. Adding the time dimension to a finite map like the one in Figure \ref{fig:instance} makes the state space infinite since each time step is distinct and agents can wait in place forever. Special attention to duplicate detection is necessary to make the search space finite.

We again emphasize that no complete, polynomial-time solvers for MAPF\textsubscript{R} are known at this time. This means that the strategy of running a separate polynomial-time algorithm in parallel to determine solvability of a MAPF\textsubscript{R} instance is not currently an option. Figure \ref{fig:mapfrinstance} illustrates a phenomenon that is introduced by MAPF\textsubscript{R}, that agents can conflict in ways that are undefined in Classic MAPF. Although the classic MAPF instance with point agents in Figure \ref{fig:mapfrinstance}(a) is solvable, the MAPF\textsubscript{R} instance in Figure \ref{fig:mapfrinstance}(b) is not. Because of this phenomenon (and others related to non-planar graphs and non-unit costs), one cannot simply analyze the graph of a MAPF\textsubscript{R} instance, nor run a classic polynomial-time MAPF solver on the graph to determine solvability.

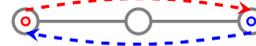
\begin{figure}[t!]

\centering
\begin{tikzpicture}[scale=1.5]

\path (.5,.5) edge [-,very thick,color=gray] node {} (2.5,.5);
\node [circle,very thick,draw=gray,fill=white,minimum size=5pt] at (.5,.5) {};
\node [circle,very thick,draw=gray,fill=white,minimum size=5pt] at (1.5,.5) {};
\node [circle,very thick,draw=gray,fill=white,minimum size=5pt] at (2.5,.5) {};

\node[circle,draw=red,text=red,thick,fill=red!10,inner sep=0pt,minimum size=3pt] at (.5,.5) {};
\node[circle,draw=blue,text=blue,thick,fill=blue!10,inner sep=0pt,minimum size=3pt] at (2.5,.5) {};

\path (.5,.6) edge [->,>=stealth,very thick,bend left=10,dashed,color=red] node {} (2.5,.6);
\path (2.5,.4) edge [->,>=stealth,very thick,bend left=10,dashed,color=blue] node {} (.5,.4);
\end{tikzpicture}
\caption{An unsolvable MAPF instance where the agents must swap places.}
\label{fig:instance}
\end{figure}

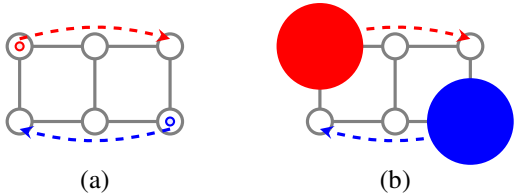
\begin{figure}[b!]

\centering
\begin{tikzpicture}[scale=1]
\node[circle,draw=white,text=white,thick,fill=white,inner sep=0pt,minimum size=32pt, fill opacity=0.2] at (3.25,.5) {};
\path (.5,.5) edge [-,very thick,color=gray] node {} (2.5,.5);
\path (.5,1.5) edge [-,very thick,color=gray] node {} (2.5,1.5);
\path (.5,.5) edge [-,very thick,color=gray] node {} (.5,1.5);
\path (1.5,.5) edge [-,very thick,color=gray] node {} (1.5,1.5);
\path (2.5,.5) edge [-,very thick,color=gray] node {} (2.5,1.5);
\node [circle,very thick,draw=gray,fill=white,minimum size=5pt] at (.5,.5) {};
\node [circle,very thick,draw=gray,fill=white,minimum size=5pt] at (.5,1.5) {};
\node [circle,very thick,draw=gray,fill=white,minimum size=5pt] at (1.5,.5) {};
\node [circle,very thick,draw=gray,fill=white,minimum size=5pt] at (1.5,1.5) {};
\node [circle,very thick,draw=gray,fill=white,minimum size=5pt] at (2.5,.5) {};
\node [circle,very thick,draw=gray,fill=white,minimum size=5pt] at (2.5,1.5) {};

\node[circle,draw=red,text=red,thick,fill=red!10,inner sep=0pt,minimum size=3pt] at (.5,1.5) {};
\node[circle,draw=blue,text=blue,thick,fill=blue!10,inner sep=0pt,minimum size=3pt] at (2.5,.5) {};

\path (.5,1.6) edge [->,>=stealth,very thick,bend left=15,dashed,color=red] node {} (2.5,1.6);
\path (2.5,.4) edge [->,>=stealth,very thick,bend left=15,dashed,color=blue] node {} (.5,.4);
\node at (1.5,-.3) {(a)};
\end{tikzpicture}
\begin{tikzpicture}[scale=1]
\path (.5,.5) edge [-,very thick,color=gray] node {} (2.5,.5);
\path (.5,1.5) edge [-,very thick,color=gray] node {} (2.5,1.5);
\path (.5,.5) edge [-,very thick,color=gray] node {} (.5,1.5);
\path (1.5,.5) edge [-,very thick,color=gray] node {} (1.5,1.5);
\path (2.5,.5) edge [-,very thick,color=gray] node {} (2.5,1.5);
\node [circle,very thick,draw=gray,fill=white,minimum size=5pt] at (.5,.5) {};
\node [circle,very thick,draw=gray,fill=white,minimum size=5pt] at (.5,1.5) {};
\node [circle,very thick,draw=gray,fill=white,minimum size=5pt] at (1.5,.5) {};
\node [circle,very thick,draw=gray,fill=white,minimum size=5pt] at (1.5,1.5) {};
\node [circle,very thick,draw=gray,fill=white,minimum size=5pt] at (2.5,.5) {};
\node [circle,very thick,draw=gray,fill=white,minimum size=5pt] at (2.5,1.5) {};

\node[circle,draw=red,text=red,thick,fill=red,inner sep=0pt,minimum size=32pt, fill opacity=0.1] at (.5,1.5) {};
\node[circle,draw=blue,text=blue,thick,fill=blue,inner sep=0pt,minimum size=32pt, fill opacity=0.1] at (2.5,.5) {};

\path (.5,1.6) edge [->,>=stealth,very thick,bend left=15,dashed,color=red] node {} (2.5,1.6);
\path (2.5,.4) edge [->,>=stealth,very thick,bend left=15,dashed,color=blue] node {} (.5,.4);
\node at (1.5,-.3) {(b)};
\end{tikzpicture}
\caption{A MAPF instance that becomes unsolvable in MAPF\textsubscript{R} with larger agents.}
\label{fig:mapfrinstance}
\end{figure}

\section{Completeness of CBS}

Before discussing the completeness of CBS, it is worth mentioning that in popular MAPF benchmark sets~\cite{stern2019multi} there are no unsolvable problem instances. Generally speaking, unsolvable instances are often hand-crafted, and (depending on the domain) tend to be very rare. Our purpose here is not to make CBS scale to larger problem instances, but to close the question of completeness by adding functionality to CBS so that it is \textit{natively} complete for classic MAPF, and identify an approach to completeness adequate for MAPF\textsubscript{R} as well.

Aside from duplicate pruning, CBS could be made complete by first computing the theoretical upper bound on the size of the CT for a problem instance, and then forcing CBS to terminate once the CT grows beyond that upper bound.

The total number of unique states in all possible paths of cost $C$ for a single agent in a Classic MAPF instance can not exceed the lesser of $C^3$ and $|V|C$, where $C$ is the makespan of the lowest-cost solution~\cite{gordon2021revisiting}.
Hence the total number of unique multi-agent states in all possible solutions is bounded by the lesser of ${C^3}^k$ and ${(|V|C})^k$.
A similar bound for MAPF\textsubscript{R} does not exist because the number of unique times for states in a path is infinite.
However, for MAPF\textsubscript{Q}, the bound is ${(Cr)^3}^k$ or $(|V|Cr)^k$. Where $r{\in}\mathbb{Q}_+$ is the inverse resolution of time (e.g., $r{=}100$ for a resolution of $\nicefrac{1}{100}$). Note, $r{=}1$ for Classic MAPF.

It has been shown that the makespan, $C$, for a solvable MAPF instance is $O(|V|^3)$~\cite{kornhauser1984coordinating}. Substituting for $C$, we find that CBS can immediately terminate when the CT reaches a size of $2^{k(|V|r)^4}$. This approach trivially makes CBS complete. However, this simple  approach is not practical. For the example instance in Figure \ref{fig:instance}, where the red agent must swap places with the blue agent, CBS could not terminate safely until generating a CT of size $O(2^{2\cdot 3^4})$. 

This is often worse than running a coupled algorithm like A*, which (with proper duplicate detection) would terminate after $O({|V|^4}^k)$ expansions. This is because a coupled algorithm in Classic MAPF doesn't have to keep track of time, just the relative locations of the agents. If A* were to include time in the duplicate detection (as it usually must for continuous-time MAPF), it too would be incomplete.  Neither of these bounds (for A* and CBS) are tight because most of the multi-agent state space is not reachable. For the example instance in Figure \ref{fig:instance}, only 3 unique states would be expanded in total by A* before termination.

The number of high-level expansions with TRDP is finite, but still has the same upper bound for CBS (on a solvable problem instance) without TRDP. However, with TRDP, the actual search space may theoretically be reduced exponentially~\cite{taylor1997pruning}, though one may not see such a reduction in practice. Running CBS with TRDP on the example instance in Figure \ref{fig:instance} terminates after only 5 high-level expansions.

\section{Mechanics of Duplicate Detection}

Duplicate pruning is a technique commonly used with search algorithms in graphs with cycles in order to eliminate exploring sub-optimal paths with loops~\cite{taylor1997pruning}. A \emph{loop} happens when a path visits the same state twice. It is trivial to see that any path with a loop is sub-optimal because concatenating the portions of the path before and after the loop results in a shorter path to the goal.

In MAPF, single-agent loops may be necessary -- a feasible solution may correctly contain many agents visiting the same vertex more than once (for example when one agent waits for another), however, \emph{$k$-agent loops} can be a source of incompleteness. For CBS and many other algorithms that include the time dimension as a part of the state, loops are not recognized by comparing states since revisiting a state (in a multi-agent sense) happens at a different time.

\emph{Temporally-relative duplicate pruning} (TRDP) defines \emph{temporally-relative duplicate} (TRD) detection which allows the removal of states which have been visited before in a temporally-relative sense.

We define TRD formally as follows: Let $S{=}\{s_1,..,s_k\}$ be a \emph{joint-state} composed of $k$ single-agent states. All single-agent states are not required to to have identical times in $S$, but in our definition, the respective actions taken to arrive at $S$, must have time overlap. Specifically, for each single-agent action $(\hat{s_i}, s_i)$ for $\hat{s_i}{\in}\hat{S}$ and $s_i{\in}S$ where $\hat{S}$ is the ancestor of $S$, must have time overlap with every other action $(\hat{s_j}, s_j)$:
\begin{equation*}
  \begin{aligned}
  \hat{s_i}.t \leq s_j.t \leq s_i.t \\
\text{or\;\;\;\;\;\;\;\;\;\;\,}\\
\hat{s_j}.t \leq s_i.t \leq s_j.t
\end{aligned}
\end{equation*}

To define a TRD $S'$ of $S$, we first define $t_\text{min}(S)$ to be the earliest single-agent time of $S$:
$$t_\text{min}(S)=\underset{s_i\in S}{\textsc{min}} s_i.t$$

Next, we define a joint-state temporal adjustment function ${\Delta_t}(S)$, which adjusts the time component of all single-agent states in $S$ to be relative to $t_\text{min}(S)$:
$${\Delta_t}(S)=\{\forall s_i{\in}S; (s_i.v, s_i.t-t_\text{min}(S))\}$$

$S'$, a descendant of $S$, is a TRD of $S$ iff the vertex part of the states are identical and the temporally-relative times are identical. That is: ${\Delta_t}(S)={\Delta_t}(S')$. For example, from Figure \ref{fig:instance} we could have a joint state $S=\{(A1,0.1), (A3,0.2)\}$. Thus, $t_\text{min}(S)=0.1$, and therefore ${\Delta_t}(S)=\{(A1,0.0), (A3,0.1)\}$. Supposing another state $S'=\{(A1,0.2),(A3,0.3)\}$, then ${\Delta_t}(S')=\{(A1,0.0), (A3,0.1)\}$. Because ${\Delta_t}(S)={\Delta_t}(S')$, $S'$ is a TRD of $S$.

Pruning duplicates from the search space renders a search space finite under certain assumptions~\cite{taylor1997pruning}. However, in MAPF, it is important to ensure that the duplicates are temporally relative. Continuing the previous example, let $S''=\{(A1,0.1),(A3,0.3)\}$. $S''$ is not a TRD of $S$; although the agents are at the same locations $(A1,A3)$ respectively, they are not at those locations at the same \emph{relative} time; ${\Delta_t}(S){\neq}{\Delta_t}(S'')$.

If $S''$ were part of the only feasible solution $\Pi^*$, and $S''$ were to be pruned (as an erroneous TRD of $S$), then the algorithm would never find $\Pi^*$, making the algorithm incomplete. On the other hand, if $S'$ (being a proper TRD of $S$) were to be pruned, finding $\Pi^*$ is not precluded because any path to the goal from $S'$ is identical (in a temporally relative sense) to a path to the goal from $S$.

\section{Duplicate Pruning in CBS}

Our discussion will now focus on the implementation of TRDP in CBS~\cite{CBS}. First, we explain the CBS algorithm, then we explain the TRDP implementation. Pseudocode for the CBS algorithm is shown in Algorithm \ref{alg:trdp} with changes for TRDP on line \ref{alg1:trdcheck}.

\subsection{CBS}
CBS searches a conflict tree (CT) where each node $N$ contains a solution $N.\Pi$. The root node contains paths for each agent without taking the paths of other agents into account (line \ref{alg1:root}). $N.\Pi$ may contain conflicts, and CBS will detect conflicts between the paths in $N.\Pi$ (line \ref{alg1:concheck}). If any conflict exists between the paths for agents $i$ and $j$, two child nodes $N_i$ and $N_j$ are generated, one for each agent in conflict. Each child node contains a new constraint for a single agent to avoid the conflict (line \ref{alg1:constraint}).

Many different types of constraints are possible. For example, a vertex constraint $c = \langle i,v,t\rangle$ blocks agent $i$ from entering vertex $v$ at time $t$. Taking constraints for $N_i$ and all its ancestors into account (line \ref{alg1:ancestor}), $\pi_i{\in}N_i.\Pi$ is re-planned (line \ref{alg1:replan}) to respect the new constraints (and consequently avoid the conflict with agent $j$). This process is done analogously for agent $j$. In any case that the agent cannot reach its goal in the context of all constraints (i.e., over-constrained) the CT node is pruned (line \ref{alg1:prune}). Eventually, after enough constraints are accumulated, CBS will find a feasible solution iff one exists.

A partial example of a CT is shown in Figure \ref{fig:incomplete}. The root node contains shortest paths to the goal and child nodes resolve conflicts by adding constraints to one agent or the other, which in this case causes the agents to wait. In the figure, an arrow ($\leftarrow$) means an action that moves the agent to an adjacent node, and a looped arrow ($\wait$) means a wait action. A wait action at the goal is shown with a box around it. The pruning shown with `X's in the figure is due to TRDP and will be discussed later. The search stops when a conflict-free solution is found in the node with a green border.


\subsection{CBS+TRDP}

As previously mentioned, any solution containing a $k$-agent loop is sub-optimal. Since CBS is an optimal algorithm, a solution containing a $k$-agent loop in a CT node can be treated similar to an infeasible solution. Therefore, TRDP introduces a new type of conflict called a temporally-relative duplicate conflict or \textit{TRD conflict}. A TRD conflict occurs when a temporally-relative $k$-agent loop exists in a solution $N.\Pi$. Thus, in addition to detecting traditional motion conflicts between pairs of agents (see Algorithm \ref{alg:trdp} line \ref{alg1:concheck}), CBS+TRDP detects TRD conflicts (Algorithm \ref{alg:trdp} line \ref{alg1:trdcheck}) using the approach defined in the previous section (Algorithm \ref{alg:trdd}).

In response to a TRD conflict, TRDP generates $k$ constraints, one for each agent, which causes CBS to generate $k$ child nodes $N_1,...,N_k$, such that each child node $N_i$ contains a TRD constraint. A \emph{TRD constraint} is a tuple $\langle i, [t_\text{start}, t_\text{end}), t_\text{offset} \rangle$, where $i$ is the agent number, $[t_\text{start}, t_\text{end})$ is a time range in which states are considered to be a \textit{loop-start candidate} state, and $t_\text{offset}$ is an exact time offset from a loop-start candidate. For example, a TRD constraint of: $\langle i, [0.1,0.3), 0.4\rangle$, means that for agent $i$, any state in the time range $[0.1,0.3)$ will be considered a loop-start candidate, and any descendant state at the same vertex with an exact time offset of $0.4$ is considered a duplicate.

Thus, a TRD constraint enforces the rule that the agent being re-planned at the low level will avoid \textit{all} states $s'$, which are duplicates of \textit{any} loop-start candidate $s$, such that $s.t\in [t_\text{start}, t_\text{end})$, $s.v{=}s'.v$ and $s'.t{=}s.t+t_\text{offset}$. For example, during low-level planning, given a TRD constraint of: $\langle i, [0.1,0.3), 0.4\rangle$, if the agent's path were to visit the loop-start candidate $s{=}(v, 0.2)$, it would not be allowed to visit $s'{=}(v, 0.6)$ because it visits the same vertex at the exact offset time.

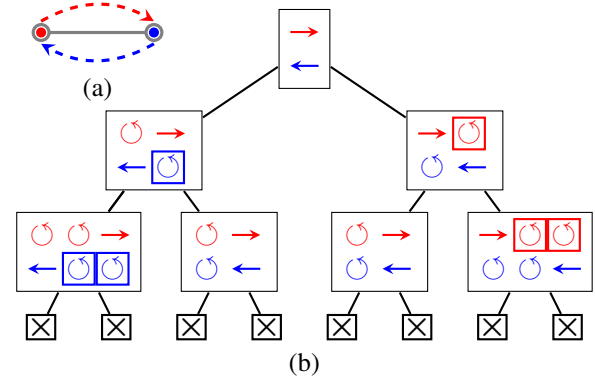
\begin{figure}[b!]

\centering
\begin{tikzpicture}[scale=1, node distance=2.4cm, thick,
cloud/.style={inner sep=0pt,fill=none}]

\node[cloud] (q) at (-2.75,0) {
    \begin{tikzpicture}[scale=1.5,anchor=center]
\path (.5,.5) edge [-,very thick,color=gray] node[text=black,shift={(0,.16)}] {} (1.5,.5);
\node [circle,very thick,draw=gray,fill=white,minimum size=6pt] at (.5,.5) {};
\node [circle,very thick,draw=gray,fill=white,minimum size=6pt] at (1.5,.5) {};
\node[circle,draw=blue,text=blue,thick,fill=blue,inner sep=0pt,minimum size=3pt, fill opacity=0.1] at (1.5,.5) {};
\node[circle,draw=red,text=red,thick,fill=red,inner sep=0pt,minimum size=3pt, fill opacity=0.1] at (.5,.5) {};

\path (.5,.6) edge [->,>=stealth,very thick,bend left=30,dashed,color=red] node {} (1.5,.6);
\path (1.5,.4) edge [->,>=stealth,very thick,bend left=30,dashed,color=blue] node {} (.5,.4);

\node[] at (1,0) {(a)};
    \end{tikzpicture}
};

\node[cloud] (a) {
    \begin{tikzpicture}[scale=1.5,anchor=center]
    \draw[color=black] (.15,0) rectangle (.6,.7);
    \path (.25,.5) edge [->,>=stealth,thick,color=red] node {} (.5,.5);
    \path (.5,.2) edge [->,>=stealth,thick,color=blue] node {} (.25,.2);
    \end{tikzpicture}
};

\node[cloud,below of=a] at ([shift={(-2,1.05)}]a) (aa){
    \begin{tikzpicture}[scale=1.5,anchor=center]
    \draw[color=black] (.15,0) rectangle (1,.7);
    \node[circle,draw=white,text=red,thick,solid,fill=white,inner sep=0pt,minimum size=15pt] at (.375,.5) {\Large $\wait$};
    \node[draw=blue,text=blue,thick,solid,fill=white,inner sep=0pt,minimum size=12pt] at (.7,.2) {\Large $\wait$};
    \path (.6,.5) edge [->,>=stealth,thick,color=red] node {} (.85,.5);
    \path (.5,.2) edge [->,>=stealth,thick,color=blue] node {} (.25,.2);
    \end{tikzpicture}
};

\node[cloud,below of=a] at ([shift={(2,1.05)}]a) (ab) {
    \begin{tikzpicture}[scale=1.5,anchor=center]
    \draw[color=black] (.15,0) rectangle (1,.7);
    \node[draw=red,text=red,thick,solid,fill=white,inner sep=0pt,minimum size=12pt] at (.7,.5) {\Large $\wait$};
    \node[draw=white,text=blue,thick,solid,fill=white,inner sep=0pt,minimum size=1pt] at (.375,.2) {\Large $\wait$};
    \path (.25,.5) edge [->,>=stealth,thick,color=red] node {} (.5,.5);
    \path (.85,.2) edge [->,>=stealth,thick,color=blue] node {} (.6,.2);
    \end{tikzpicture}
};

\node[cloud,below of=aa] at ([shift={(-1,1.05)}]aa) (aaa){
    \begin{tikzpicture}[scale=1.5,anchor=center]
    \draw[color=black] (.15,0) rectangle (1.25,.7);
    \node[circle,draw=white,text=red,thick,solid,fill=white,inner sep=0pt,minimum size=15pt] at (.375,.5) {\Large $\wait$};
    \node[circle,draw=white,text=red,thick,solid,fill=white,inner sep=0pt,minimum size=15pt] at (.7,.5) {\Large $\wait$};
    \node[draw=blue,text=blue,thick,solid,fill=white,inner sep=0pt,minimum size=12pt] at (.7,.2) {\Large $\wait$};
    \node[draw=blue,text=blue,thick,solid,fill=white,inner sep=0pt,minimum size=12pt] at (1,.2) {\Large $\wait$};
    \path (.9,.5) edge [->,>=stealth,thick,color=red] node {} (1.15,.5);
    \path (.5,.2) edge [->,>=stealth,thick,color=blue] node {} (.25,.2);
    \end{tikzpicture}
};

\node[cloud,below of=aa] at ([shift={(1,1.05)}]aa) (aab){
    \begin{tikzpicture}[scale=1.5,anchor=center]
    \draw[color=black] (.15,0) rectangle (1,.7);
    \node[circle,draw=white,text=red,thick,solid,fill=white,inner sep=0pt,minimum size=15pt] at (.375,.5) {\Large $\wait$};
    \path (.6,.5) edge [->,>=stealth,thick,color=red] node {} (.85,.5);
    \node[circle,draw=white,text=blue,thick,solid,fill=white,inner sep=0pt,minimum size=15pt] at (.375,.2) {\Large $\wait$};
    \path (.85,.2) edge [->,>=stealth,thick,color=blue] node {} (.6,.2);
    \end{tikzpicture}
};

\node[cloud,below of=ab] at ([shift={(-1,1.05)}]ab) (aba){
    \begin{tikzpicture}[scale=1.5,anchor=center]
    \draw[color=black] (.15,0) rectangle (1,.7);
    \node[circle,draw=white,text=red,thick,solid,fill=white,inner sep=0pt,minimum size=15pt] at (.375,.5) {\Large $\wait$};
    \path (.6,.5) edge [->,>=stealth,thick,color=red] node {} (.85,.5);
    \node[circle,draw=white,text=blue,thick,solid,fill=white,inner sep=0pt,minimum size=15pt] at (.375,.2) {\Large $\wait$};
    \path (.85,.2) edge [->,>=stealth,thick,color=blue] node {} (.6,.2);
    \end{tikzpicture}
};

\node[cloud,below of=ab] at ([shift={(1,1.05)}]ab) (abb){
    \begin{tikzpicture}[scale=1.5,anchor=center]
    \draw[color=black] (.15,0) rectangle (1.25,.7);

    \node[circle,draw=white,text=blue,thick,solid,fill=white,inner sep=0pt,minimum size=15pt] at (.375,.2) {\Large $\wait$};
    \node[circle,draw=white,text=blue,thick,solid,fill=white,inner sep=0pt,minimum size=15pt] at (.7,.2) {\Large $\wait$};
    \path (1.15,.2) edge [->,>=stealth,thick,color=blue] node {} (.9,.2);
    \node[draw=red,text=red,thick,solid,fill=white,inner sep=0pt,minimum size=12pt] at (.7,.5) {\Large $\wait$};
    \node[draw=red,text=red,thick,solid,fill=white,inner sep=0pt,minimum size=12pt] at (1,.5) {\Large $\wait$};
    \path (.25,.5) edge [->,>=stealth,thick,color=red] node {} (.5,.5);
    \end{tikzpicture}
};

\node[below of=aaa,draw=black,inner sep=0] at ([shift={(-.5,1.4)}]aaa) (aaaa){\Large $\boldsymbol{\times}$};
\node[below of=aaa,draw=black,inner sep=0] at ([shift={(.5,1.4)}]aaa) (aaab){\Large $\boldsymbol{\times}$};
\node[below of=aab,draw=black,inner sep=0] at ([shift={(-.5,1.4)}]aab) (aaba){\Large $\boldsymbol{\times}$};
\node[below of=aab,draw=black,inner sep=0] at ([shift={(.5,1.4)}]aab) (aabb){\Large $\boldsymbol{\times}$};
\node[below of=aba,draw=black,inner sep=0] at ([shift={(-.5,1.4)}]aba) (abaa){\Large $\boldsymbol{\times}$};
\node[below of=aba,draw=black,inner sep=0] at ([shift={(.5,1.4)}]aba) (abab){\Large $\boldsymbol{\times}$};
\node[below of=abb,draw=black,inner sep=0] at ([shift={(-.5,1.4)}]abb) (abba){\Large $\boldsymbol{\times}$};
\node[below of=abb,draw=black,inner sep=0] at ([shift={(.5,1.4)}]abb) (abbb){\Large $\boldsymbol{\times}$};

\path (a) edge [-] (aa)
      (a) edge [-] (ab)
      (aa) edge [-] (aaa)
      (aa) edge [-] (aab)
      (aa) edge [-] (aaa)
      (ab) edge [-] (abb)
      (ab) edge [-] (aba)
      (aaa) edge [-] (aaaa)
      (aaa) edge [-] (aaab)
      (aab) edge [-] (aaba)
      (aab) edge [-] (aabb)
      (aba) edge [-] (abaa)
      (aba) edge [-] (abab)
      (abb) edge [-] (abba)
      (abb) edge [-] (abbb)
      ;

\node[] at (0,-4.2) {(b)};

\end{tikzpicture}

\caption{(a) An unsolvable MAPF instance and (b) a partial CBS high-level search tree with TRDP.}
\label{fig:complete}
\end{figure}

\begin{algorithm}[t!]
\caption{CBS+TRDP Algorithm}
\label{alg:trdp}
{\fontsize{8}{8}\selectfont
\begin{algorithmic}[1]
\algrenewcommand\algorithmicindent{1em}%
\State Input: a MAPF instance
\State $OPEN\gets\emptyset$
\State Initialize the root node and add it to $OPEN$ \label{alg1:root}
\While{$OPEN\neq\emptyset$}
\State $N\gets OPEN.pop()$
\State $\color{red}C\gets\textsc{findTRDs}(N.\Pi)$  \label{alg1:trdcheck} \Comment Find TRDs, return constraints
\If{$C=\emptyset$}
  \State $C\gets\textsc{findConflict}(N.\Pi)$  \label{alg1:concheck} \Comment Find conflicts, return constraints
\EndIf
\If{$C=\emptyset$}
  \State \textbf{return} $N.\Pi$
\Else
  \For{$i,c \in C$} \Comment For each agent $i$ and set of constraints $c$ \label{alg1:foreach}
      \State $N'\gets N$\label{alg1:ancestor}
      \State $N'.C\gets N'.C\cup c$ \label{alg1:constraint}
      \State $\pi' \gets Replan(i,N'.C)$ \label{alg1:replan}
      \If{$\pi' \neq \emptyset$} \label{alg1:prune}
        \State $N'.\Pi.\pi_i\gets\pi'$
        \State $OPEN\gets OPEN \cup N'$
      \EndIf
  \EndFor \label{alg1:conjchildend}
\EndIf
\EndWhile
\State \textbf{return} "No solution"
\end{algorithmic}
}
\end{algorithm}

\begin{algorithm}[t!]
\caption{\textsc{findTRDs}: TRD Detection and Constraint Generation}
\label{alg:trdd}
{\fontsize{8}{8}\selectfont
\begin{algorithmic}[1]
\algrenewcommand\algorithmicindent{1em}%
\State Input: a MAPF solution $\Pi$.
\State Check if a loop $\langle s,s'\rangle$ occurs in any $\pi_i\in\Pi$; save pointers to loops. \label{alg2:check1}
\State If fewer than $k$ paths have loops, return $\emptyset$.
\State Check for TRDs $\langle S,S' \rangle$ using pointers to loops from step \ref{alg2:check1}. \label{alg2:check2}
\State If no $k$-agent loop exists, return $\emptyset$.
\State Create $k$ TRD constraints for each agent: \label{alg1:kcstr}
\For {$i\in 1..k$ }
\State $c_i\gets\langle i, [t_\text{min}(S),t_\text{max}(S)), t_\text{min}(S')-t_\text{min}(S) \rangle$
\EndFor
\State \textbf{return} $\{c_1,...,c_k\}$, the set of TRD constraints, one for each agent.

\end{algorithmic}
}
\end{algorithm}

The expansion routine for the low-level A* search is shown in Algorithm \ref{alg:trdpex}. During the expansion, for any loop-start candidate, we add a \emph{TRD list} to search nodes that contains a list of \emph{TRD pairs} (line \ref{alg3:pairs}). Each TRD pair contains a pointer to an ancestor node, $s$, and an offset time $d$.offset. Once set, the TRD list is propagated to descendants.

The regular CBS constraint pruning check is carried out first (line \ref{alg3:constr}). Then the TRD constraint pruning check is carried out (line \ref{alg3:trdp}). In this check, if the vertex is the same as some loop-start candidate, and the offset time criteria is exactly met, then the node $s'$ is not generated. Note that $s'$ could still be generated if reached by another path through an ancestor that is not a loop-start candidate.

A* normally contains a dominance check which replaces an open node with any node of lower cost. This is still part of our expansion routine. However if a successor is visited via another path with equal g-cost, but the node has no TRDlist, the TRD list is updated (line \ref{alg3:dom2}). This effectively removes the TRDlist and ensures that if any equally low-cost path exists which avoids the loop, it will be retained.

\begin{algorithm}[t!]
\caption{Low Level A* Expansion Routine}
\label{alg:trdpex}
{\fontsize{8}{8}\selectfont
\begin{algorithmic}[1]
\algrenewcommand\algorithmicindent{1em}%
\State Input: $n$ a single-agent search node,
\State $\;\;\;\;\;\;\;\;\,C$ a set of "regular" CBS constraints,
\State $\;\;\;\;\;\;\;\;\,D$ a set of TRD constraints
\State $\triangleright$ Generate successor nodes.
\For{$s' \in \textsc{successors}(n.s)$} \label{algexp:succloop}
  \State $\triangleright$ Prune successors that violate regular CBS constraints.
  \If{\textit{any} $c \in C$ \textit{blocks} $s'$} \label{alg3:constr}
    \State Back to top of loop on line \ref{algexp:succloop}. \Comment{Successor is pruned.}
  \EndIf
  \State \color{red}$\triangleright$ Prune successors that would complete a multi-agent loop.
  \For{TRDpair $\in n.\text{TRDlist}$} \label{alg3:trdp}
    \If{TRDpair$.s.v = s'.v$ AND \\$\;\;\;\;\;\;\;\;\;\;\;\text{TRDpair}.s.t + \text{TRDpair}.\text{offset} = s'.t$}
      \State Back to top of loop on line \ref{algexp:succloop} \Comment{Successor is pruned.}
    \EndIf
  \EndFor
  \State $n' \gets \{s',\text{n.gcost}+cost(s,s'),n.\text{TRDlist}\}$ \Comment{Successor node.}
  \State $\triangleright$ Add TRDpairs if applicable. \label{alg3:pairs}
  \For{$d \in D$}
    \If{$s'.t >= d.t_\text{start}$ AND $s'.t <= d.t_\text{end}$}
      \State $n'.\text{TRDlist} \gets n'.\text{TRDlist} \cup \langle s', d.\text{offset}\rangle$ \Comment{Add TRDpair.}
    \EndIf
  \EndFor\color{black}
  \State $\triangleright$ Check for dominance.
  \If{$n' \in OPEN$ AND \\
  \;\;($n'.\text{gcost} < OPEN.\text{fetch}(n').\text{gcost}$ \color{red}OR \\ \label{alg:dom1}
  $\;\;(n'.\text{gcost} = OPEN.\text{fetch}(n').\text{gcost}$ AND $n'.\text{TRDlist} = \emptyset$)\color{black})} \label{alg3:dom2}
    \State $OPEN.\text{fetch}(n') \gets n'$ \Comment{Update OPEN node.}
  \Else
    \State $OPEN \gets OPEN \cup n'$
  \EndIf
\EndFor
\end{algorithmic}
}
\end{algorithm}

\begin{figure}[b!]

\centering
\begin{tikzpicture}[scale=1, node distance=2.4cm, thick,
cloud/.style={inner sep=0pt,fill=none}]

\node[cloud] (q) at (-3.4,-0.3) {
    \begin{tikzpicture}[scale=1.5,anchor=center]
\path (.5,.5) edge [-,very thick,color=gray] node[text=black,shift={(0,.16)}] {1} (1.5,.5);
\path (.5,.45) edge [->,>=stealth,very thick,bend right=85,color=gray, looseness=1.5] node[text=black,shift={(0,.15)}] {100} (1.5,.45);
\node [circle,very thick,draw=gray,fill=white,minimum size=6pt] at (.5,.5) {};
\node [circle,very thick,draw=gray,fill=white,minimum size=6pt] at (1.5,.5) {};
\node[circle,draw=blue,text=blue,thick,fill=blue,inner sep=0pt,minimum size=3pt, fill opacity=0.1] at (1.5,.5) {};
\node[circle,draw=red,text=red,thick,fill=red,inner sep=0pt,minimum size=3pt, fill opacity=0.1] at (.5,.5) {};

\path (.5,.6) edge [->,>=stealth,very thick,bend left=30,dashed,color=red] node {} (1.5,.6);
\path (1.5,.4) edge [->,>=stealth,very thick,bend left=30,dashed,color=blue] node {} (.5,.4);

\node[] at (1,-.3) {(a)};
    \end{tikzpicture}
};

\node[cloud] (a) {
    \begin{tikzpicture}[scale=1.5,anchor=center]
    \draw[color=black] (.15,0) rectangle (.6,.7);
    \path (.25,.5) edge [->,>=stealth,thick,color=red] node {} (.5,.5);
    \path (.5,.2) edge [->,>=stealth,thick,color=blue] node {} (.25,.2);
    \end{tikzpicture}
};

\node[cloud,below of=a] at ([shift={(-1.8,1.05)}]a) (aa){
    \begin{tikzpicture}[scale=1.5,anchor=center]
    \draw[color=black] (.15,0) rectangle (1,.7);
    \node[circle,draw=white,text=red,thick,solid,fill=white,inner sep=0pt,minimum size=15pt] at (.375,.5) {\Large $\wait$};
    \node[draw=blue,text=blue,thick,solid,fill=white,inner sep=0pt,minimum size=12pt] at (.7,.2) {\Large $\wait$};
    \path (.6,.5) edge [->,>=stealth,thick,color=red] node {} (.85,.5);
    \path (.5,.2) edge [->,>=stealth,thick,color=blue] node {} (.25,.2);
    \end{tikzpicture}
};

\node[cloud,below of=a] at ([shift={(1.8,1.05)}]a) (ab) {
    \begin{tikzpicture}[scale=1.5,anchor=center]
    \draw[color=black] (.15,0) rectangle (1,.7);
    \node[draw=red,text=red,thick,solid,fill=white,inner sep=0pt,minimum size=12pt] at (.7,.5) {\Large $\wait$};
    \node[circle,draw=white,text=blue,thick,solid,fill=white,inner sep=0pt,minimum size=15pt] at (.375,.2) {\Large $\wait$};
    \path (.25,.5) edge [->,>=stealth,thick,color=red] node {} (.5,.5);
    \path (.85,.2) edge [->,>=stealth,thick,color=blue] node {} (.6,.2);
    \end{tikzpicture}
};

\node[cloud,below of=aa] at ([shift={(-.8,1.05)}]aa) (aaa){
    \begin{tikzpicture}[scale=1.5,anchor=center]
    \draw[color=black] (.15,0) rectangle (1.25,.7);
    \node[circle,draw=white,text=red,thick,solid,fill=white,inner sep=0pt,minimum size=15pt] at (.375,.5) {\Large $\wait$};
    \node[circle,draw=white,text=red,thick,solid,fill=white,inner sep=0pt,minimum size=15pt] at (.7,.5) {\Large $\wait$};
    \node[draw=blue,text=blue,thick,solid,fill=white,inner sep=0pt,minimum size=12pt] at (.7,.2) {\Large $\wait$};
    \node[draw=blue,text=blue,thick,solid,fill=white,inner sep=0pt,minimum size=12pt] at (1,.2) {\Large $\wait$};
    \path (.9,.5) edge [->,>=stealth,thick,color=red] node {} (1.15,.5);
    \path (.5,.2) edge [->,>=stealth,thick,color=blue] node {} (.25,.2);
    \end{tikzpicture}
};

\node[cloud,below of=aaa] at ([shift={(-.8,1.05)}]aaa) (aaaa){
    \begin{tikzpicture}[scale=1.5,anchor=center]
    \draw[color=green!50!black,very thick] (.15,0) rectangle (1.25,.7);
    \node[draw=blue,text=blue,thick,solid,fill=white,inner sep=0pt,minimum size=12pt] at (1,.2) { $\ldots$};
    \node[draw=blue,text=blue,thick,solid,fill=white,inner sep=0pt,minimum size=12pt] at (.7,.2) {\Large $\wait$};

    \path (.5,.2) edge [->,>=stealth,thick,color=blue] node {} (.25,.2);
    
    \path (.275,.5) edge [->,>=stealth,thick,bend right=25,color=red, looseness=1] node[text=red,shift={(0,.18)}] {100} (1.1,.5);
    \end{tikzpicture}
};

\node[cloud,below of=aa] at ([shift={(.8,1.05)}]aa) (aab){
    \begin{tikzpicture}[scale=1.5,anchor=center]
    \draw[color=black] (.15,0) rectangle (1,.7);
    \node[circle,draw=white,text=red,thick,solid,fill=white,inner sep=0pt,minimum size=15pt] at (.375,.5) {\Large $\wait$};
    \path (.6,.5) edge [->,>=stealth,thick,color=red] node {} (.85,.5);
    \node[circle,draw=white,text=blue,thick,solid,fill=white,inner sep=0pt,minimum size=15pt] at (.375,.2) {\Large $\wait$};
    \path (.85,.2) edge [->,>=stealth,thick,color=blue] node {} (.6,.2);
    \end{tikzpicture}
};

\node[cloud,below of=ab] at ([shift={(-.8,1.05)}]ab) (aba){
    \begin{tikzpicture}[scale=1.5,anchor=center]
    \draw[color=black] (.15,0) rectangle (1,.7);
    \node[circle,draw=white,text=red,thick,solid,fill=white,inner sep=0pt,minimum size=15pt] at (.375,.5) {\Large $\wait$};
    \path (.6,.5) edge [->,>=stealth,thick,color=red] node {} (.85,.5);
    \node[circle,draw=white,text=blue,thick,solid,fill=white,inner sep=0pt,minimum size=15pt] at (.375,.2) {\Large $\wait$};
    \path (.85,.2) edge [->,>=stealth,thick,color=blue] node {} (.6,.2);
    \end{tikzpicture}
};

\node[cloud,below of=ab] at ([shift={(.8,1.05)}]ab) (abb){
    \begin{tikzpicture}[scale=1.5,anchor=center]
    \draw[color=black] (.15,0) rectangle (1.25,.7);

    \node[circle,draw=white,text=blue,thick,solid,fill=white,inner sep=0pt,minimum size=15pt] at (.375,.2) {\Large $\wait$};
    \node[circle,draw=white,text=blue,thick,solid,fill=white,inner sep=0pt,minimum size=15pt] at (.7,.2) {\Large $\wait$};
    \path (1.15,.2) edge [->,>=stealth,thick,color=blue] node {} (.9,.2);
    \node[draw=red,text=red,thick,solid,fill=white,inner sep=0pt,minimum size=12pt] at (.7,.5) {\Large $\wait$};
    \node[draw=red,text=red,thick,solid,fill=white,inner sep=0pt,minimum size=12pt] at (1,.5) {\Large $\wait$};
    \path (.25,.5) edge [->,>=stealth,thick,color=red] node {} (.5,.5);
    \end{tikzpicture}
};

\node[below of=aab,draw=black,inner sep=0] at ([shift={(-.5,1.4)}]aab) (aaba){\Large $\boldsymbol{\times}$};
\node[below of=aab,draw=black,inner sep=0] at ([shift={(.5,1.4)}]aab) (aabb){\Large $\boldsymbol{\times}$};
\node[below of=aba,draw=black,inner sep=0] at ([shift={(-.5,1.4)}]aba) (abaa){\Large $\boldsymbol{\times}$};
\node[below of=aba,draw=black,inner sep=0] at ([shift={(.5,1.4)}]aba) (abab){\Large $\boldsymbol{\times}$};
\node[below of=abb,draw=black,inner sep=0] at ([shift={(-.5,1.4)}]abb) (abba){\Large $\boldsymbol{\times}$};
\node[below of=abb,draw=black,inner sep=0] at ([shift={(.5,1.4)}]abb) (abbb){\Large $\boldsymbol{\times}$};
\node[below of=aaa,draw=black,inner sep=0] at ([shift={(.5,1.4)}]aaa) (aaab){\Large $\boldsymbol{\times}$};

\path (a) edge [-] (aa)
      (a) edge [-] (ab)
      (aa) edge [-] (aaa)
      (aa) edge [-] (aab)
      (aa) edge [-] (aaa)
      (ab) edge [-] (abb)
      (ab) edge [-] (aba)
      (aab) edge [-] (aaba)
      (aab) edge [-] (aabb)
      (aba) edge [-] (abaa)
      (aba) edge [-] (abab)
      (abb) edge [-] (abba)
      (abb) edge [-] (abbb)
      (aaa) edge [-] (aaab)
      (aaa) edge [-] (aaaa)
      ;

\node[] at (0,-4.4) {(b)};

\end{tikzpicture}

\caption{(a) A solvable MAPF\textsubscript{R} instance and (b) a partial CBS high-level search tree with TRDP.}
\label{fig:incomplete}
\end{figure}

TRDP can be implemented in CBS by modifying the expansion routine as shown in Algorithm \ref{alg:trdp}. The only change is on Line \ref{alg1:trdcheck} which adds a TRD conflict check before the regular conflict check. If a TRD conflict is found, a node is generated for each agent with appropriate TRD constraints to help it avoid the $k$-agent loop.

The TRD detection and constraint generation algorithm is shown in Algorithm \ref{alg:trdd}. First, each path $\pi{\in}\Pi$ is checked for a single-agent loop (line 2). If all $k$ paths contain a loop, then all paths are checked for loops which have time overlap (line 4). Finally, TRD constraints are created for each agent with the appropriate information.

Given the MAPF instance in Figure \ref{fig:complete}(a), Figure \ref{fig:complete}(b) shows the resulting CT when TRDP is used.
In the root node, each agent's path moves straight to its goal. After the first split, the red agent is forced to wait on the first time step in the left child and the blue agent is forced to wait on the first time step in the right child. Wait actions at the goal are taken into account in the TRD check. TRD constraints are added to each of the four leaf nodes because they each contain a 2-agent loop. Because the TRD constraint in each case blocks the agent from entering the loop, and the regular constraint caused by the split blocks the agent from the conflict, the result is that the low-level solver cannot find a path to the goal and therefore the CT node is pruned. Because of this, CBS correctly terminates with $\emptyset$.

Figure \ref{fig:incomplete}(a) adds a higher-cost directed edge to the instance in \ref{fig:complete}(a) so that the instance is solvable. In Figure \ref{fig:incomplete}(b) the red agent is allowed to take the longer edge in the leftmost leaf node and achieve the goal. This goal is found quickly because moving on the shorter edge is blocked by regular constraints and waiting is blocked by the TRD constraint. Without TRDP, an exponential number of nodes would have to be generated before the cost could increase enough for CBS to accept the case of the red agent taking the longer edge as the optimal solution.

\section{Theoretical Analysis}

We now show that CBS+TRDP is optimal and complete. First, we show that $k$-agent loops are sub-optimal. Second, resolving them using TRD constraints can never block an optimal, feasible solution. Finally, that if the problem instance is unsolvable, TRDP guarantees termination.

\begin{definition}
    \textbf{Over-constrained:} \normalfont A CT node is over-constrained when a solution cannot be found in its sub-tree due to a constraint or collection of constraints which blocks a solution.
\end{definition}

\begin{definition}
    \textbf{Loop:} \normalfont A loop occurs when a single agent visits the same vertex more than once in its path. This includes waiting in place for one or more actions.
\end{definition}

\begin{lemma}
    \label{obs:sub}
    An optimal path cannot contain a loop.
\end{lemma}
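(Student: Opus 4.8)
The plan is to argue by contradiction: suppose some optimal single-agent path $\pi_i = [s_i^0, \dots, s_i^d]$ contains a loop, and then exhibit a strictly cheaper valid path to the same goal, contradicting optimality. By the definition of a loop, there are two indices $a < b$ with $s_i^a = (v, t_a)$ and $s_i^b = (v, t_b)$ sharing the same vertex $v$. Since costs and times increase strictly along a path (every edge weight, including that of a wait self-edge, is positive), we have $t_a < t_b$, and the intervening subpath $[s_i^a, \dots, s_i^b]$ therefore has strictly positive cost.

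The construction I would use is to excise the loop: form $\pi_i'$ by concatenating the prefix $[s_i^0, \dots, s_i^a]$ with the suffix $[s_i^{b+1}, \dots, s_i^d]$, after shifting every time component in the suffix backward by $\delta = t_b - t_a$. First I would verify that $\pi_i'$ is a well-formed path. It still begins at $(start_i, 0)$ and ends at $goal_i$; the junction at $s_i^a$ is legal because $s_i^a$ and $s_i^b$ occupy the same vertex $v$, so the edge that originally left $s_i^b$ can equally be taken from $s_i^a$; and every retained edge is traversed exactly as before, because a uniform time shift preserves each inter-state duration and hence keeps each traversal consistent with its fixed edge weight. Concretely, the action that formerly ran from $s_i^b$ to $s_i^{b+1}$ now runs from $s_i^a$ to the shifted $s_i^{b+1}$ with unchanged duration $t_{b+1} - t_b$, so $\pi_i'$ visits a valid sequence of edges in $E$ from start to goal.

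Next I would compare costs. The path $\pi_i'$ is obtained from $\pi_i$ by deleting exactly the edges of the loop while leaving all other edge costs untouched (the time shift changes only the timestamps, not which edges are traversed, and the paper treats edge cost as a fixed attribute of an edge). Since the deleted loop has strictly positive cost, $\pi_i'$ is strictly cheaper than $\pi_i$, contradicting the assumed optimality of $\pi_i$; hence no optimal path contains a loop.

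The only point needing genuine care — and thus the main obstacle — is confirming that the spliced $\pi_i'$ remains a truly executable path rather than merely a shorter vertex sequence: one must check that shifting the suffix uniformly in time keeps every remaining action consistent with its edge weight, and that the degenerate case where the ``loop'' is a single wait action (a self-edge at $v$) is also covered, which it is, since wait edges carry positive weight and deleting one strictly reduces cost.
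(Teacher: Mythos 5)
Your proof is correct and is essentially the paper's argument in expanded form: the paper gives the same one-line construction (concatenate the sub-path before the loop with the sub-path after it to obtain a strictly cheaper path), and you have simply filled in the supporting details — the uniform backward time shift of the suffix, the legality of the splice at the repeated vertex, and the positivity of edge weights (including wait self-edges) guaranteeing the excised loop has strictly positive cost. No gaps; nothing further is needed.
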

\begin{proof}
 An optimal path cannot contain a loop because a shorter path can always be obtained by concatenating the sub-path before the loop to the sub-path after the loop.
\end{proof}

\begin{corollary}
\label{cor:sub}
    By extension of Lemma \ref{obs:sub}, if a solution has a temporally-relative $k$-agent loop, the solution must be sub-optimal.
\end{corollary}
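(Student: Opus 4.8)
The plan is to prove the corollary constructively, in direct analogy with Lemma~\ref{obs:sub}: from a solution $\Pi$ that contains a temporally-relative $k$-agent loop I will exhibit a \emph{feasible} solution of strictly smaller cost, which witnesses that $\Pi$ is sub-optimal. The single-agent argument (splice the path before a loop onto the path after it) is lifted to the joint level, with the extra care that the splice must not introduce conflicts.

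First I would unpack the definition of the loop to extract its one crucial structural property, a common temporal offset. By hypothesis there are joint states $S$ and a descendant $S'$ with $\Delta_t(S)=\Delta_t(S')$. Writing $\delta = t_\text{min}(S')-t_\text{min}(S)$, the equality $\Delta_t(S)=\Delta_t(S')$ yields, for every agent $i$, both $s_i.v = s'_i.v$ (same vertex) and $s'_i.t - s_i.t = \delta$ (same offset, independent of $i$); moreover $\delta>0$ since $S'$ occurs strictly later than $S$. Hence each agent $i$ occupies the same vertex at the two times $s_i.t$ and $s_i.t+\delta$, so the segment of $\pi_i$ between them is a single-agent loop in the sense of Lemma~\ref{obs:sub}, and every such loop shares the same width $\delta$.

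Next I would build the candidate shorter solution by excising all $k$ loops simultaneously under this uniform shift: for each agent keep $\pi_i$ unchanged up to time $s_i.t$ and thereafter follow the post-loop remainder advanced earlier by $\delta$ (the new position at a time $\tau>s_i.t$ is the old position at $\tau+\delta$). Each splice is continuous because $s_i.v=s'_i.v$, and since each excised segment has positive duration $\delta$ and therefore positive cost, both flowtime and makespan can only decrease; because not every agent can already be at its goal at $S'$ (otherwise the solution would have terminated there), at least one agent's goal-arrival time genuinely shrinks, making the decrease strict.

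The main obstacle is verifying that the spliced plan remains conflict-free. Partitioning time and comparing any pair of agents, the cases where both are simultaneously in their pre-loop regime, or both in their post-loop regime, reduce immediately to conflict-freeness of $\Pi$ at time $\tau$ and at $\tau+\delta$ respectively, since the shift is uniform. The delicate case is the ``mixed'' regime, in which at a common new time one agent is executing its pre-loop motion while another executes its back-shifted post-loop motion; here the comparison is between configurations of $\Pi$ taken $\delta$ apart, which feasibility of $\Pi$ does not cover directly. This is precisely where the time-overlap requirement built into the definition of a joint state $S$ is needed: because the arrival actions into $S$ (and, correspondingly, into $S'$) pairwise overlap in time, their intervals share a common instant, which bounds how far the per-agent splice points $s_i.t$ may spread and lets me match each mixed-regime pair to a co-occurring, conflict-free pair of the original plan. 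I expect this feasibility bookkeeping, rather than the cost estimate, to be the crux; once it is settled, the existence of a strictly cheaper feasible solution contradicts optimality and the corollary follows.
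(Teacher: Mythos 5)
Your skeleton is the one the paper intends: the paper offers no argument beyond the phrase ``by extension of Lemma~\ref{obs:sub}'', and the natural extension is exactly your simultaneous splice. Your extraction of the uniform offset $\delta=t_\text{min}(S')-t_\text{min}(S)$ from $\Delta_t(S)=\Delta_t(S')$ is correct, and your cost accounting is simpler than you make it: since each agent's post-loop remainder contains its final goal arrival, every arrival time drops by exactly $\delta>0$, so the decrease is strict for both makespan and flowtime without your ``not every agent is already at its goal'' caveat. In classic MAPF, where actions are synchronized and of unit duration, all splice points $s_i.t$ coincide, the mixed regime is empty, and your proof is complete.

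For MAPF\textsubscript{Q} and MAPF\textsubscript{R}, however, the step you defer --- mixed-regime feasibility --- is a genuine gap, and the mechanism you gesture at does not close it. Pairwise time overlap of the actions entering $S$ means the arrival intervals pairwise intersect, hence (by one-dimensional Helly) share a common instant; this bounds the spread of the splice points, as you say. It does not let you ``match each mixed-regime pair to a co-occurring, conflict-free pair of the original plan'': writing $\Pi_i(\tau)$ for agent $i$'s position at time $\tau$ under $\Pi$, the spliced plan at a time $\tau$ with $s_i.t<\tau<s_j.t$ pits $\Pi_i(\tau+\delta)$ against $\Pi_j(\tau)$, whereas feasibility of $\Pi$ only certifies pairs sampled at a \emph{common} time. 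The condition $\Delta_t(S)=\Delta_t(S')$ constrains the joint states, not the incoming actions: agent $j$'s arrival into $s'_j$ need not be a $\delta$-shifted copy of its arrival into $s_j$ (it may come in on a different edge, or with different timing along the edge), so the configuration $\bigl(\Pi_i(\tau+\delta),\Pi_j(\tau)\bigr)$ is geometry that $\Pi$ never exhibits, and with extended agent shapes and long non-unit edges a collision there is not excluded by anything you have assumed. To repair the argument you would need either a proof that a conflict in the mixed window forces a conflict in $\Pi$ (false as stated), or a splice executed at a single common instant together with a treatment of the resulting mid-edge mismatch; neither appears, and ``once it is settled'' is carrying the entire continuous-time case --- which is precisely the case the corollary is invoked for.
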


Now that we have shown that TRDs are sub-optimal, we show that pruning them with TRDP never precludes an optimal solution.

\begin{lemma}
    \label{lem:elim}
    CBS+TRDP only eliminates sub-optimal or infeasible solutions.
\end{lemma}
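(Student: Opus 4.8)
The plan is to show that the $k$-way TRD split preserves every optimal, feasible solution, so that the only solutions removed entirely from the search are sub-optimal or infeasible. Since Corollary \ref{cor:sub} already establishes that any solution realizing a temporally-relative $k$-agent loop is sub-optimal, it suffices to prove the contrapositive-flavored claim: a feasible solution can be pruned in \emph{every} TRD-child simultaneously only when it itself contains such a loop. The sub-optimal case is then immediate from Corollary \ref{cor:sub}, and the infeasible case is covered by the over-constraint pruning on line \ref{alg1:prune}, which by definition only discards nodes whose subtree contains no feasible completion.

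First I would fix a node $N$ at which a TRD conflict is detected, with loop-start joint state $S$ and TRD $S'$, and let $\Pi^*$ be any optimal, feasible solution consistent with the constraints of $N$. Recall that \textsc{findTRDs} emits the $k$ constraints $c_i = \langle i, [t_\text{min}(S), t_\text{max}(S)), t_\text{min}(S')-t_\text{min}(S)\rangle$, all sharing a single offset $\delta = t_\text{min}(S')-t_\text{min}(S)$ and a single loop-start window. I would argue by contradiction: suppose $\Pi^*$ is eliminated by this split, i.e.\ it violates $c_i$ for every agent $i$ (pruning by over-constraint is handled separately, as noted above).

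The heart of the argument is to show that simultaneous violation of all $k$ constraints forces a temporally-relative $k$-agent loop into $\Pi^*$. Violating $c_i$ means agent $i$'s path visits some vertex $v_i$ at a loop-start time $\tau_i \in [t_\text{min}(S), t_\text{max}(S))$ and revisits $v_i$ at exactly $\tau_i + \delta$. Gathering the loop-start states into a joint state $\hat S$ and the revisits into $\hat S'$, every agent is shifted by the \emph{common} offset $\delta$, so the relative times are preserved and $\Delta_t(\hat S) = \Delta_t(\hat S')$, making $\hat S'$ a TRD of $\hat S$. Hence $\Pi^*$ contains a temporally-relative $k$-agent loop (not necessarily the original one), and by Corollary \ref{cor:sub} (via Lemma \ref{obs:sub}) it is sub-optimal, contradicting optimality. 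Therefore $\Pi^*$ survives in at least one child, and no optimal feasible solution is blocked.

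I expect the reconstitution step to be the main obstacle. The per-agent loop-start times $\tau_i$ may differ across agents, so I must verify that $\hat S$ is a \emph{legitimate} joint state whose constituent actions pairwise overlap in time — the condition in the TRD definition — rather than a collection of independent single-agent loops. The role of the shared half-open window $[t_\text{min}(S), t_\text{max}(S))$ is precisely to bound the spread of the $\tau_i$ so this overlap holds, and the uniformity of $\delta$ is what makes $\Delta_t(\hat S) = \Delta_t(\hat S')$ follow automatically; confirming both (and checking the window's boundary case) is the delicate part. The remaining optimality bookkeeping, including the observation that the low-level search blocks only the loop-completing transition while retaining any equal-cost non-looping alternative, is routine given Corollary \ref{cor:sub}.
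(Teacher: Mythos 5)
Your overall architecture is the same as the paper's. The paper's proof also splits off the case where no optimal solution passes through $N$ (your over-constraint remark, covering line~\ref{alg1:prune}), and for the remaining case argues by contradiction that if an optimal solution reachable through $N$ were blocked in \emph{every} one of the $k$ children, the optimal solution would itself contain a TRD, contradicting Lemma~\ref{obs:sub} / Corollary~\ref{cor:sub}. You are actually more explicit than the paper at the decisive step: where the paper simply asserts ``blocked in every sub-tree $\Rightarrow$ a TRD exists in the optimal solution,'' you reconstruct the joint states $\hat{S}$, $\hat{S}'$ from the per-agent violations and verify $\Delta_t(\hat{S})=\Delta_t(\hat{S}')$ from the uniformity of the offset $\delta=t_\text{min}(S')-t_\text{min}(S)$. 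That verification is correct and is a genuine improvement in rigor over the published one-liner.

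However, the obstacle you flagged yourself is real, and the repair you sketch does not work as stated. The paper's definition of a joint state requires that the arriving actions pairwise overlap in time ($\hat{s_i}.t \le s_j.t \le s_i.t$ or symmetrically), and the shared window $[t_\text{min}(S), t_\text{max}(S))$ bounds only the \emph{arrival times} $\tau_i$, not the action intervals. With short-duration actions in continuous time, agent $i$ can arrive at $\tau_i$ near $t_\text{min}(S)$ via an action ending at $\tau_i$, while agent $j$ arrives at $\tau_j$ near $t_\text{max}(S)$ via an action that begins strictly after $\tau_i$; then no pairwise overlap holds, $\hat{S}$ is not a legitimate joint state, and Corollary~\ref{cor:sub} does not directly apply to $(\hat{S},\hat{S}')$. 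Your contradiction then stalls: you are left with $k$ single-agent loops sharing the offset $\delta$ but not necessarily forming a temporally-relative $k$-agent loop, and the excision argument behind Lemma~\ref{obs:sub} (delete each loop and shift each suffix earlier by $\delta$) can create conflicts between an unshifted prefix of one agent and the shifted suffix of another when the cut points are staggered beyond action overlap. So the claim that ``the window is precisely what makes the overlap hold'' is false, and an additional argument is needed — either that such staggered common-offset loop patterns are themselves sub-optimal, or that the constraints of Algorithm~\ref{alg:trdd} prune only genuinely overlapping recurrences. In fairness, the paper's own proof makes exactly the same implicit leap, so your proposal is no weaker than the published argument; but as written it is incomplete at the one step you correctly identified as delicate.
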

\begin{proof}
    Let $N$ be a CT node for which $N.\Pi$ contains a $k$-agent loop. $N.\Pi$ is sub-optimal per Lemma \ref{obs:sub}. Consider all optimal solutions which can be reached on a path in the CT:

    \begin{enumerate}[noitemsep]
        \item There is no path through $N$ to an optimal solution.
        \item There is a path through $N$ to an optimal solution.
    \end{enumerate}
    
For case (1), either some agents are over-constrained in $N$, or there is no solution to the problem instance in general. Since an optimal solution is not reachable through $N$, any actions blocked by $c_i$ have no effect on finding an optimal solution in the sub-tree of $N$.

The proof for case (2) is very similar to the original proof for CBS. In this case, it is certain that at least one of the single-agent loops must be avoided in an optimal solution, thus at least one $c_i$ added to one of the child nodes $N_i$ in the $k$-way split (based on the $k$ TRD constraints generated in Algorithm \ref{alg:trdp} line \ref{alg1:trdcheck}) must be correct. If any $c_i$ is incorrect (blocks an optimal solution) then that solution is guaranteed to be found in at least one sibling node of $N_i$. By contradiction, if the optimal solution is blocked in every sub-tree, then either $N$ is already over-constrained (contradicting our assumptions) or a TRD can exist in an optimal solution, but this violates Lemma \ref{obs:sub}.

In summary, TRDP either prunes unsolvable sub-trees, or the optimal solution is guaranteed to lie in at least one sub-tree of the child nodes of $N$.
\end{proof}

Next, we show that TRDP guarantees termination.

\begin{lemma}
    \label{lem:finite}
    TRDP renders the search space of CBS finite for Classic MAPF and MAPF\textsubscript{Q}.
\end{lemma}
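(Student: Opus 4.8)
The plan is to prove that the conflict tree (CT) explored by CBS+TRDP is finite by bounding both its branching factor and its depth. The branching factor is immediate: a motion conflict spawns two children and a TRD conflict spawns $k$ children, so every node has at most $\max(2,k)$ successors. All of the work therefore goes into bounding the depth, which I reduce to bounding the times (and hence the makespans) that can appear in the solution $N.\Pi$ of any node $N$ that is neither a goal nor pruned.

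First I would count the distinct temporally-relative joint states. In Classic MAPF and MAPF\textsubscript{Q} the vertex set $V$ is finite and every time is an integer multiple of $1/r$ (with $r{=}1$ for Classic MAPF), so the only possible source of infiniteness is the time component of a joint state. The crucial observation is that the time-overlap requirement built into the definition of a joint state $S$ forces $t_\text{max}(S)-t_\text{min}(S)\le w_\text{max}$, where $w_\text{max}$ is the largest edge weight: taking the earliest and latest agents, the latest agent's arrival must fall inside the earliest agent's incoming action interval (or vice versa), whose length is a single edge weight. Consequently, after applying $\Delta_t$, each agent's relative time lies in the finite set $\{0,\tfrac1r,\dots,w_\text{max}\}$, and the number of distinct temporally-relative joint states is at most $M := \bigl(|V|\,(r\,w_\text{max}+1)\bigr)^k$, which is finite.

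Next comes a pigeonhole step. Reading a sequence of valid joint states off any conflict-free solution $\Pi$, and using that each action spans at most $w_\text{max}$ in time, a solution of makespan $C$ passes through a sequence of at least $C/w_\text{max}$ joint states; hence once $C$ exceeds a threshold $C_\text{max}$ at which this number surpasses $M$, two joint states $S$ and a descendant $S'$ must satisfy $\Delta_t(S)=\Delta_t(S')$. Since identical vertices in $S$ and $S'$ is exactly a single-agent loop for every agent, this is a $k$-agent loop, i.e. a TRD. By Corollary~\ref{cor:sub} such a $\Pi$ is sub-optimal, so Algorithm~\ref{alg:trdd} detects the TRD and performs a split rather than returning $\Pi$, and by Lemma~\ref{lem:elim} that split never discards an optimal solution. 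It follows that every un-pruned non-goal node carries only motion and TRD constraints whose times lie below a fixed multiple of $C_\text{max}$; there are finitely many such constraints over the bounded, discrete time range, each downward CT edge adds a new one, and constraints are never removed, so no root-to-leaf path exceeds the total number of admissible constraints. A tree of finite depth and finite branching factor is finite, giving the claim.

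The main obstacle is keeping the constraint-time bound from becoming circular: I must ensure that the constraints actually generated only ever reference times at most a multiple of $C_\text{max}$, rather than assuming what I am proving. I would close this by noting that a shortest single-agent path subject to a finite set of $\langle v,t\rangle$ and TRD constraints, on a finite graph with discretized time, has makespan bounded by a function of $|V|$, $r$, and the number of constraints (so the low-level A* of Algorithm~\ref{alg:trdpex} cannot escape the bound), and by taking TRD detection to report the \emph{earliest} repeated configuration, which the pigeonhole count places at time at most $C_\text{max}$. The most delicate detail is defining ``the sequence of valid joint states'' along an asynchronous MAPF\textsubscript{Q} solution so that each sampled configuration genuinely satisfies the time-overlap condition required by the TRD definition; handling these staggered arrival times correctly inside the pigeonhole count is where most of the care is needed.
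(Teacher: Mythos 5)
Your counting step is, in substance, the paper's own proof with explicit constants: the paper likewise uses the time-overlap requirement in the definition of a joint state, plus rationality of the weights, to confine the relative times produced by $\Delta_t$ to a finite set --- it bounds the number of distinct relative times by $O(\textsc{MAX}(W)/\textsc{GCD}(W))$, which is your $r\,w_\text{max}{+}1$ in different notation --- and then concludes from finiteness of $V$ that the \emph{range} of $\Delta_t$ is finite. Notably, the paper's proof stops there: it reads ``the search space'' as the temporally-relative joint-state space on which duplicate detection operates, and finiteness of that space is the entire content of the lemma (the bridge to termination is handled informally in Theorem~\ref{thm:main}). Your pigeonhole construction (sampling joint states at common instants, so pairwise overlap holds, and observing that equal $\Delta_t$ images with distinct absolute times force a uniform positive shift, i.e.\ a genuine $k$-agent loop) is a sound and welcome elaboration of what the paper leaves implicit.

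The genuine gap is in your CT-depth argument, and you half-see it yourself. The depth bound rests on the claim that every constraint generated in an un-pruned node references times ``below a fixed multiple of $C_\text{max}$,'' so that root-to-leaf paths are bounded by the size of a \emph{fixed}, finite pool of admissible constraints. That claim is never established: motion-conflict times, and the quantities $t_\text{min}(S)$, $t_\text{max}(S)$, $t_\text{min}(S')$ in TRD constraints, are bounded only by the makespans of the current node's paths, and your own patch concedes that the low-level makespan is bounded by a function of the \emph{number} of constraints --- which grows with CT depth. So the pool of admissible constraints grows with the very depth you are trying to bound, and the circularity you flag is not actually closed. Concretely: a constrained replan can lawfully push an agent's arrival later, the next conflict can then occur at that later time, and the resulting constraint's times escape any fixed multiple of $C_\text{max}$; this can repeat indefinitely as far as your argument shows. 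If the lemma is read as the paper proves it, your first two paragraphs suffice and coincide with the published proof; the stronger CT-finiteness claim needs a different mechanism (e.g., arguing directly over the finite temporally-relative state space that low-level paths avoiding forbidden TRDs have boundedly many TR-distinct prefixes), not a count of constraints indexed by absolute time.
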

\begin{proof}
    Let $E$, the set of edges of $G$, be finite. Let $W{\in}\mathbb{Q}$ be the set of edge weights for $E$. $W$ must be finite. Recall that duplicate detection is performed based on a transformed form of $S$ via the function $\Delta_t(S)$. This is done by subtracting $t_{min}(S)$ from the time ($s.t$) of each single-agent state in $S$. The range of $t_{min}(S)$ consists of combinations of multiples of $W$, hence the maximum number of unique times in the range of $\Delta_t(S)$ is $O(\textsc{MAX}(W)/\textsc{GCD}(W))$, where $\textsc{GCD}$ is the greatest common denominator of floating point numbers. Because $W{\in}\mathbb{Q}$, $\textsc{GCD}(W)$ is well-defined and the range of times for $\Delta_t(S)$ is finite. Since the range of times for $\Delta_t(S)$ is finite, and $V$ is finite, the range of $\Delta_t(S)$ is finite.

    In summary, although the domain of $S$ is infinite because there is no upper bound on the domain of $s.t$, CBS+TRDP performs duplicate detection on $\Delta_t(S)$ which has a finite range, therefore the search space of CBS+TRDP is finite.
\end{proof}

Finally, we combine Lemmas \ref{lem:elim} and \ref{lem:finite} to show that CBS+TRDP is optimal and complete.

\begin{thm}
    \label{thm:main}
    CBS+TRDP is optimal and complete.
\end{thm}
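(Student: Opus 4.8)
The plan is to assemble the theorem directly from the two principal lemmas already in hand, splitting the claim into its three obligations: optimality, termination-with-a-solution (completeness part~1), and termination-with-$\emptyset$ (completeness part~2). Since baseline CBS is already established to be optimal and to satisfy part~1, I would frame the argument as showing that the addition of TRDP \emph{preserves} these two properties while newly supplying part~2, which is the long-standing gap the paper targets.

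For optimality and part~1, I would invoke Lemma~\ref{lem:elim}. The essential observation is that a TRD conflict is resolved exactly as an ordinary CBS conflict: a $k$-way split in which each child $N_i$ receives one TRD constraint $c_i$. Lemma~\ref{lem:elim} guarantees this split never removes a reachable optimal feasible solution from the CT --- in its case~(2) at least one $c_i$ is correct, and any solution an incorrect $c_i$ blocks survives in a sibling subtree. Consequently the best-first ordering of the CT by solution cost is undisturbed, so the first conflict-free, TRD-free node popped still has minimum cost; whenever a solution exists, CBS+TRDP therefore returns an optimal one, precisely as in baseline CBS.

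For part~2, I would invoke Lemma~\ref{lem:finite}. Because duplicate detection is carried out on $\Delta_t(S)$, whose range is finite for Classic MAPF and MAPF\textsubscript{Q}, the reachable high-level search space is finite. Hence if the instance is unsolvable the search cannot generate CT nodes indefinitely: each low-level replanning either yields a path or is pruned as over-constrained, and the finite range of $\Delta_t(S)$ bounds the number of distinct joint states the search can revisit. The \texttt{OPEN} list is thus eventually exhausted, at which point Algorithm~\ref{alg:trdp} returns ``No solution'' in finite time.

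The main obstacle is to show the two lemmas are genuinely \emph{compatible} rather than merely juxtaposed: finiteness in Lemma~\ref{lem:finite} is obtained \emph{by pruning}, yet optimality in Lemma~\ref{lem:elim} demands that pruning never discard the needed solution. The crux is the joint guarantee that, on a solvable instance, the finite reachable CT still contains a path to an optimal solution (so best-first search terminates with it), while on an unsolvable instance that same finiteness forces \texttt{OPEN} to empty. Establishing this compatibility --- namely that the TRD constraints which shrink the space to a finite one are exactly the sub-optimal-loop-eliminating constraints of Lemma~\ref{lem:elim}, so the two effects coincide rather than conflict --- is where the real content of the theorem lies; once it is in place, optimality, part~1, and part~2 follow together.
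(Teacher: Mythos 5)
Your proposal is correct and follows essentially the same route as the paper's own proof: Lemma~\ref{lem:elim} supplies optimality and termination-with-a-solution (since TRD splits never eliminate a reachable optimal solution under best-first CT search), while Lemma~\ref{lem:finite} supplies termination-with-$\emptyset$ via the finite range of $\Delta_t(S)$. The ``compatibility'' you single out as the remaining crux is precisely what Lemma~\ref{lem:elim} already certifies --- that the pruning which renders the space finite removes only sub-optimal or infeasible solutions --- so the theorem does follow by direct combination of the two lemmas, exactly as in the paper.
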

\begin{proof}
    Per Lemma \ref{lem:elim}, CBS+TRDP only eliminates sub-optimal and infeasible solutions. Since CBS searches the CT in a best-first fashion, and optimal solutions cannot be eliminated by TRDP, CBS+TRDP is optimal. Per Lemma \ref{lem:finite}, TRDP makes the search space finite. Therefore, the size of the CT has a finite upper bound. Hence, CBS+TRDP is guaranteed to terminate, regardless of whether the problem instance is solvable.
\end{proof}

\section{Bypassing TRD Splits}

In settings where the number of agents is large compared to the size of the graph (agent-dense settings), many TRDs can occur, resulting in many $k$-way splits in the CT. This can lead to a very large CT and cause a significant amount of work. Our experiments showed that TRDP often reduces the depth in the CT at which a solution is found when compared to regular CBS in agent-dense settings. However, the average branching factor is increased. Thus TRDP usually causes a significantly larger CT to be generated when compared with regular CBS.

Fortunately, the $k$-way split can be completely avoided in many cases. The procedure to do so is simple and is based on the bypass procedure for regular CBS~\cite{CBSBP}. When a TRD is found, we test each agent individually for a bypass by adding a TRD constraint and re-planning its path. If the agent is able to find an alternate path that (1) avoids the loop, (2) does not increase the path cost, and (3) does not incur more conflicts, this path is a \textit{bypass}. Upon finding a bypass, we replace $\pi_i\in N.\Pi$ with the bypass in the parent node $N$. Then $N$ is re-inserted into the OPEN list. 
If we fail to find a bypass, the results of this computation (each $c_i$ and corresponding re-planned paths) are used to generate the $k$ successor nodes.

We found that this simple procedure, on average, avoided up to 99\% of $k$-way splits for agent-dense, solvable instances. We now prove that the TRDP bypass procedure preserves optimality and completeness.

\begin{thm}
    CBS+TRDP with the bypass procedure is optimal and complete.
\end{thm}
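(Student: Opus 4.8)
The plan is to argue that the bypass procedure is a cost- and solution-preserving refinement of CBS+TRDP, so that both properties reduce to Theorem~\ref{thm:main}. For optimality, I would first observe that a bypass replaces a single-agent path $\pi_i\in N.\Pi$ only with an alternative of \emph{equal} cost (condition~2), so the cost $f(N)$ that governs the best-first ordering of OPEN is unchanged, and $N$ re-enters OPEN at exactly its previous priority. Consequently the CBS invariant that $f(N)$ lower-bounds the cost of every solution in the subtree rooted at $N$ is maintained, exactly as in the analysis of the original CBS bypass~\cite{CBSBP}. Combined with Lemma~\ref{lem:elim}, which guarantees that a TRD split never eliminates an optimal solution, and with the fact that a bypass prunes nothing (it merely substitutes an equal-cost, loop-free path), no optimal solution is ever removed or reordered behind a sub-optimal one. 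When no bypass exists we fall back to the $k$-way TRD split, whose optimality already follows from Theorem~\ref{thm:main}. Hence the first solution returned is optimal.

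For completeness it remains to show termination, and the only new danger is \emph{livelock}: a node $N$ re-inserted into OPEN infinitely often by successive bypasses. I would rule this out with a strictly decreasing progress measure. Define $\mu(N)$ to be the total number of conflicts in $N.\Pi$, counting both ordinary motion conflicts and TRD (temporally-relative $k$-agent loop) conflicts. Because every $\pi_i$ is a finite state sequence terminating at a goal and there are only $\binom{k}{2}$ agent pairs, $\mu(N)$ is always a finite non-negative integer. Each bypass, by construction, removes the targeted $k$-agent loop (condition~1) and introduces no additional conflicts (condition~3), so $\mu(N)$ strictly decreases with every re-insertion of $N$. Therefore $N$ can be bypassed only finitely many times before either it becomes conflict-free (and a solution is returned) or no bypass is found and the $k$-way split is performed.

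Finally I would combine the two pieces. By Lemma~\ref{lem:finite} the TRD duplicate check makes the reachable joint-state space, and hence the CT generated by the split fallback, finite, exactly as in Theorem~\ref{thm:main}; the bypass step adds at most finitely many re-insertions per node by the $\mu$ argument above. A finite tree in which each node is expanded finitely often yields a finite total number of high-level iterations, so CBS+TRDP with bypass terminates regardless of solvability, establishing part~2 of completeness, while the optimality argument gives part~1. I expect the main obstacle to be the termination argument rather than optimality: one must verify that the bypass conditions really do force $\mu$ to decrease \emph{strictly} — in particular that ``does not incur more conflicts'' is read over the combined motion-and-TRD conflict count, so that resolving one loop cannot be silently offset by creating a fresh loop or motion conflict. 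If that accounting held only for motion conflicts, a bypass could trade one TRD for another and loop forever; ensuring the measure dominates both conflict types simultaneously is the delicate point.
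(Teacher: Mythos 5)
Your optimality argument coincides with the paper's own: the paper likewise observes that a bypass leaves the cost of $N.\Pi$ unchanged (so the best-first ordering and the lower-bound invariant are untouched) and falls back on Theorem~\ref{thm:main} for the $k$-way split case. Where you genuinely diverge is on completeness. The paper's proof is much thinner there: it argues only that a bypass adds no constraints to $N'$, hence cannot block any optimal or feasible solution in its sub-tree, and then declares completeness preserved via Lemma~\ref{lem:finite} and Theorem~\ref{thm:main}. It never addresses the livelock scenario you isolate --- a node re-inserted into OPEN arbitrarily often by successive bypasses --- even though re-inserting the modified parent is exactly what the procedure does, and finiteness of the CT (Lemma~\ref{lem:finite}) does not by itself bound how many times a single node is reprocessed. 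Your strictly decreasing measure $\mu(N)$, the combined count of motion and TRD conflicts, supplies the missing progress argument, and it is the same device that justifies termination of the original bypass~\cite{CBSBP}, where a helpful bypass is required to strictly reduce the conflict count. Your closing caveat is also well placed rather than pedantic: as literally stated, condition (3) (``does not incur more conflicts'') only forbids an increase, so a bypass that removes the targeted loop while creating one fresh TRD or motion conflict keeps $\mu$ constant and would in principle permit two equal-cost solutions to bypass back and forth forever; the repair is exactly what you propose --- read the conditions as forbidding any new conflict of either type, equivalently demand a strict decrease in the combined count, which your $\mu$ then certifies. In short, your proof buys a rigorous termination guarantee that the paper's proof asserts but does not establish (at the cost of length), while the paper's version buys brevity by implicitly inheriting the strict-decrease discipline of~\cite{CBSBP} without stating it.
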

\begin{proof}
    Let $N$ be a CT node containing a TRD in $N.\Pi$. If a bypass is found, $N.\Pi$ is fixed with a new $\pi_i$ to avoid the TRD to create $N'$, and $N'$ is inserted into OPEN. Per the bypass procedure, the cost of $N'.\Pi$ is not increased nor decreased, hence does not change the optimality properties of CBS+TRDP. No constraints are added to $N'$, hence it is impossible to block any optimal or feasible solution in the sub-tree of $N'$. Therefore, the completeness properties of CBS+TRDP are preserved. In summary, adding the bypass procedure to CBS+TRDP retains optimality and completeness as shown in Theorem \ref{thm:main}.
\end{proof}

\section{Empirical Results}

We preface this section by reminding the reader that the primary intent of TRDP is to make CBS natively complete, not necessarily to scale to larger problems. Depending on the domain, the probability of encountering an unsolvable instance may be low. (None of the MAPF benchmark set contain unsolvable problem instances.) Additionally, the probability of encountering $k$-agent loops during the search may be low. In our investigation, we found that the probability of encountering TRDs during the search is rare in typical problem instances. When running all grid MAPF benchmark problems~\cite{stern2019multi}, less than 0.01\% of problem instances triggered the TRDP logic. The most common encounter of TRDs was in small mazes with corridor widths of 1 (an agent-dense setting). 

\begin{table}[!b]
\centering
\resizebox{\columnwidth}{!}{
\begin{tabular}{r|ccc}
Problem &
\begin{tikzpicture}[scale=.5]
\draw[step=1cm,color=gray] (0,0) grid (3,3);

\draw[fill=gray,color=gray] (1,0) rectangle (2,1);
\draw[fill=gray,color=gray] (1,2) rectangle (2,3);

\node[circle,draw=red,text=red,thick,fill=red!10,inner sep=0pt,minimum size=9pt] at (.5,.5) {};
\node[circle,draw=blue,text=blue,thick,fill=blue!10,inner sep=0pt,minimum size=9pt] at (2.5,.5) {};
\node[circle,draw=orange,text=blue,thick,fill=orange!10,inner sep=0pt,minimum size=9pt] at (.5,2.5) {};

\path (.6,2.5) edge [->,>=stealth,very thick,bend left=15,dashed,color=orange] node {} (2.6,.6);
\path (.5,.6) edge [->,>=stealth,very thick,bend left=15,dashed,color=red] node {} (2.5,2.5);
\path (2.4,.4) edge [->,>=stealth,very thick,bend left=15,dashed,color=blue] node {} (.4,2.4);

\end{tikzpicture} & 
\begin{tikzpicture}[scale=.5]
\draw[step=1cm,color=gray] (0,0) grid (3,3);

\draw[fill=gray,color=gray] (0,1) rectangle (1,3);
\draw[fill=gray,color=gray] (2,1) rectangle (3,3);

\node[circle,draw=red,text=red,thick,fill=red!10,inner sep=0pt,minimum size=10pt] at (.5,.5) {};
\node[circle,draw=blue,text=blue,thick,fill=blue!10,inner sep=0pt,minimum size=10pt] at (2.5,.5) {};
\node[circle,draw=orange,text=blue,thick,fill=orange!10,inner sep=0pt,minimum size=10pt] at (1.5,2.5) {};

\path (1.5,2.5) edge [->,>=stealth,very thick,dashed,color=orange] node {} (1.5,1.5);
\path (.5,.6) edge [->,>=stealth,very thick,bend left=15,dashed,color=red] node {} (2.5,.6);
\path (2.5,.4) edge [->,>=stealth,very thick,bend left=15,dashed,color=blue] node {} (.5,.4);

\end{tikzpicture} & 

\begin{tikzpicture}[scale=.5]

\node at (2.1,1.3) [rectangle,fill=white,inner sep=0pt] {\tiny 283};
\node at (.9,1.3) [rectangle,fill=white,inner sep=0pt] {\tiny 141};
\node at (1.3,2.1) [rectangle,fill=white,inner sep=0pt] {\tiny 283};
\node at (2.1,1.8) [rectangle,fill=white,inner sep=0pt] {\tiny 250};

\node[circle,draw=red,text=red,thick,fill=red!10,inner sep=0pt,minimum size=10pt] at (2.25,2.5) {};
\node[circle,draw=blue,text=blue,thick,fill=blue!10,inner sep=0pt,minimum size=10pt] at (2.5,.5) {};
\node[circle,draw=orange,text=blue,thick,fill=orange!10,inner sep=0pt,minimum size=10pt] at (.5,2.5) {};

\node[circle,draw=black,fill=black,inner sep=0pt,minimum size=3pt] at (1,1) {};
\node[circle,draw=black,fill=black,inner sep=0pt,minimum size=3pt] at (1.5,1.5) {};
\node[circle,draw=black,fill=black,inner sep=0pt,minimum size=3pt] at (.5,2.5) {};
\node[circle,draw=black,fill=black,inner sep=0pt,minimum size=3pt] at (2.25,2.5) {};
\node[circle,draw=black,fill=black,inner sep=0pt,minimum size=3pt] at (2.5,.5) {};
\path (1,1) edge [-,color=black] node {} (1.5,1.5);
\path (.5,2.5) edge [-,color=black] node {} (1.5,1.5);
\path (2.25,2.5) edge [-,color=black] node {} (1.5,1.5);
\path (2.5,.5) edge [-,color=black] node {} (1.5,1.5);

\path (.5,2.5) edge [->,>=stealth,very thick,bend left=15,dashed,color=orange] node {} (2.25,2.5);
\path (2.25,2.5) edge [->,>=stealth,very thick,bend left=15,dashed,color=red] node {} (2.5,.5);
\path (2.5,.4) edge [->,>=stealth,very thick,bend left=15,dashed,color=blue] node {} (.5,2.5);

\end{tikzpicture} \\
\midrule
No TRDP & 309 & 24 & 511 \\
With TRDP & 297 & 21 & 298 \\

\end{tabular}
}
\caption{Size of CT with and without TRDP.}
\label{tab:hand}
\end{table}


\begin{table}[!b]
\centering
\resizebox{\columnwidth}{!}{
\begin{tabular}{r|ccccc}
Problem &
\begin{tikzpicture}[scale=.5]
\draw[step=1cm,color=gray] (0,0) grid (1,2);

\node[circle,draw=red,text=red,thick,fill=red!10,inner sep=0pt,minimum size=10pt] at (.5,.5) {};
\node[circle,draw=blue,text=blue,thick,fill=blue!10,inner sep=0pt,minimum size=10pt] at (.5,1.5) {};

\path (.4,.5) edge [->,>=stealth,very thick,bend left=15,dashed,color=red] node {} (.4,1.5);
\path (.6,1.5) edge [->,>=stealth,very thick,bend left=15,dashed,color=blue] node {} (.6,.5);

\end{tikzpicture} & 
\begin{tikzpicture}[scale=.4]
\draw[step=1cm,color=gray] (0,0) grid (1,3);

\node[circle,draw=red,text=red,thick,fill=red!10,inner sep=0pt,minimum size=8pt] at (.5,.5) {};
\node[circle,draw=blue,text=blue,thick,fill=blue!10,inner sep=0pt,minimum size=8pt] at (.5,2.5) {};

\path (.4,.5) edge [->,>=stealth,very thick,bend left=15,dashed,color=red] node {} (.4,2.5);
\path (.6,2.5) edge [->,>=stealth,very thick,bend left=15,dashed,color=blue] node {} (.6,.5);

\end{tikzpicture} & 
\begin{tikzpicture}[scale=.3]
\draw[step=1cm,color=gray] (0,0) grid (1,4);

\node[circle,draw=red,text=red,thick,fill=red!10,inner sep=0pt,minimum size=6pt] at (.5,.5) {};
\node[circle,draw=blue,text=blue,thick,fill=blue!10,inner sep=0pt,minimum size=6pt] at (.5,3.5) {};

\path (.4,.5) edge [->,>=stealth, thick,bend left=15,dashed,color=red] node {} (.4,3.5);
\path (.6,3.5) edge [->,>=stealth, thick,bend left=15,dashed,color=blue] node {} (.6,.5);

\end{tikzpicture} & 
\begin{tikzpicture}[scale=.25]
\draw[step=1cm,color=gray] (0,0) grid (1,5);

\node[circle,draw=red,text=red,thick,fill=red!10,inner sep=0pt,minimum size=5pt] at (.5,.5) {};
\node[circle,draw=blue,text=blue,thick,fill=blue!10,inner sep=0pt,minimum size=5pt] at (.5,4.5) {};

\path (.4,.5) edge [->,>=stealth,thick,bend left=15,dashed,color=red] node {} (.4,4.5);
\path (.6,4.5) edge [->,>=stealth,thick,bend left=15,dashed,color=blue] node {} (.6,.5);

\end{tikzpicture} &
\begin{tikzpicture}[scale=.5]
\draw[step=1cm,color=gray] (0,0) grid (3,3);

\draw[fill=gray,color=gray] (1,0) rectangle (2,1);
\draw[fill=gray,color=gray] (1,2) rectangle (2,3);

\node[circle,draw=red,text=red,thick,fill=red!10,inner sep=0pt,minimum size=9pt] at (.5,.5) {};
\node[circle,draw=blue,text=blue,thick,fill=blue!10,inner sep=0pt,minimum size=9pt] at (2.5,.5) {};
\node[circle,draw=orange,text=blue,thick,fill=orange!10,inner sep=0pt,minimum size=9pt] at (.5,2.5) {};
\node[circle,draw=green,text=blue,thick,fill=green!10,inner sep=0pt,minimum size=9pt] at (2.5,2.5) {};

\path (.6,2.5) edge [->,>=stealth,very thick,bend left=15,dashed,color=orange] node {} (2.6,.6);
\path (.5,.6) edge [->,>=stealth,very thick,bend left=15,dashed,color=red] node {} (2.5,2.5);
\path (2.4,.4) edge [->,>=stealth,very thick,bend left=15,dashed,color=blue] node {} (.4,2.4);
\path (2.6,2.4) edge [->,>=stealth,very thick,bend left=15,dashed,color=green] node {} (.6,.4);

\end{tikzpicture} \\
\midrule
CT Nodes & 5 & 36 & 541 & 27K & 357K \\

\end{tabular}
}
\caption{Size of CT for unsolvable MAPF instances.}
\label{tab:hand}
\end{table}

Fortunately, although problem instances which actually exercise TRDP logic may be relatively rare in practice, its inclusion in CBS has many benefits. Foremost, it ensures completeness for CBS in both MAPF and MAPF\textsubscript{Q} settings. Second, TRDP has the ability to prune portions of the high-level search which are unsolvable. Finally, including TRDP it is relatively inexpensive. The complexity of detecting TRDs in a solution is $O(kC^2)$ where $C$ is the makespan of a solution. The expense is normally mitigated further because the occurrence of single-agent loops is also relatively rare, meaning the TRD check usually exits early (see Algorithm \ref{alg:trdd} line 3 and line 5).

In this section we show two things: (1) that TRDP has performance benefits for hand-crafted examples with high agent density and (2) that TRDP incurs no significant cost for MAPF instances in general.

We ran CBS with TRDP for the entire set of MAPF benchmarks on 4-, 8-, 16- and 32-neighbor grid domains~\cite{neighborhoods}. Note that 4-neighbor grids are classic MAPF instances, but the other domains with higher connectivity are MAPF\textsubscript{Q} instances. Our tests start with two agents and increase the number of agents by one until the problem instance is no longer solvable in under 30 seconds. We recorded the runtimes and the total number of agents for which we could solve within the 30 second time limit. Additionally, we counted the number of times that TRDs occurred in any CT node in this process.

Over the nearly 32,000 experiments and millions of CT nodes generated, only 29 TRDs were ever encountered. Most TRDs were found to occur in settings with narrow corridors. These have high agent density per the number of traversable edges. As the ratio of agents per graph vertices and edges increases, the probability of encountering TRDs increases. Thus, part of our focus in this section is on agent-dense problem instances.

For some instances with high agent density, using TRDP can help prune parts of the CT in CBS which contain $k$-agent loops. A worked-out example of such a scenario is shown in Figure \ref{fig:incomplete}. Table \ref{tab:hand} shows the reduction in CT nodes required to solve some hand-crafted problem instances. The first two instances are for unit-cost grids, and the last instance is for a weighted graph with fixed wait actions of 20. We see that in all three instances, some nodes are pruned from the CT, saving work before the solution is found.

Table \ref{tab:time} shows the amount of total runtime and the amount of runtime attributed to TRD checking when varying the map type, number of agents and branching factor. The top half of the table is for ``classic'' 4-neighbor grids, and the bottom half of the table is for 32-neighbor grids. Note that in all cases, the exact same solution was found with TRDP turned on, and TRDP did not prune any nodes from the CT. We see a trend that TRDP increases the runtime in most cases, but the increase is less than .01\% on average for 4-neighbor grids and less than about .01\% on average for 32-neighbor grids. But the overhead was up to about 3\% in the most extreme cases. We see that the amount of increase is positively correlated to (1) the number of agents $k$, (2) the map size (which results in larger mean values for $C$) and (3) the single-agent branching factor. In the larger branching factor settings (e.g., in 32-neighbor grids) many edges are longer, typically resulting in longer duration actions. This causes TRD checking to take longer because of actions having partial time overlap. Additionally, edge-edge crossings in these graphs are abundant, causing more conflicts in general, resulting in a larger CT. In general, as the size of the CT increases, agents have more constraints, causing longer paths, increasing $C$, and increasing the number of single-agent loops. These factors cause the TRD check time to increase.

\begin{table}[!t]
\centering
\resizebox{\columnwidth}{!}{
\begin{tabular}{lc|ll|ll}

\multicolumn{6}{c}{4-Neighbor Grid} \\
\midrule
 & & \multicolumn{2}{c}{4 Agents} & \multicolumn{2}{c}{10 Agents} \\
\multicolumn{1}{c}{Map} & Size & \multicolumn{1}{c}{Total} & \multicolumn{1}{c}{TRD} & \multicolumn{1}{c}{Total} & \multicolumn{1}{c}{TRD} \\
 \midrule
Berlin\_1\_256 & 256x256 & \;\;\;236$\pm$26 & 0$\pm$4 & \;\;\,508$\pm$119 & \;1$\pm$13  \\
brc202d & 481x530 & 1,389$\pm$597 & 4$\pm$11 & 9,451$\pm$1,128& \;9$\pm$14 \\
empty-8-8 & 8x8 & \;\;\;\;\;\;\,1$\pm$2 & 0$\pm$0 & \;\;\;\;\,23$\pm$29 & \;0$\pm$0 \\
ht\_chantry & 141x162 & \;\;\;\;\;95$\pm$150 & 1$\pm$6 & \;\;\,856$\pm$2,180 & \;0$\pm$6 \\
maze-128-128-10 & 128x128 & \;\;\;337$\pm$1,164 & 2$\pm$16 & 8,579$\pm$6,347 & \;1$\pm$11 \\
random-32-32-10 & 32x32 & \;\;\;\;\;\;\,6$\pm$2 & 0$\pm$2 & \;\;\;\;\,17$\pm$14 &\;0$\pm$4 \\
room-64-64-8 & 64x64 & \;\;\;318$\pm$1,186 & 1$\pm$4 & 4,143$\pm$6,036 & \;4$\pm$22 \\

\midrule
\; \\
\multicolumn{6}{c}{32-Neighbor Grid} \\
\midrule
 & & \multicolumn{2}{c}{4 Agents} & \multicolumn{2}{c}{10 Agents} \\
\multicolumn{1}{c}{Map} & Size & \multicolumn{1}{c}{Total} & \multicolumn{1}{c}{TRD} & \multicolumn{1}{c}{Total} & \multicolumn{1}{c}{TRD} \\
 \midrule
Berlin\_1\_256 & 256x256 & \;\;\,775$\pm$430 & \;\,1$\pm$14 & \;\,1,484$\pm$20 & \;\;21$\pm$98 \\
brc202d & 481x530 & 1,929$\pm$405 & 24$\pm$46 & \;\,3,902$\pm$6,053 & 131$\pm$205 \\
empty-8-8 & 8x8 & 2,404$\pm$4,137 & \;\;0$\pm$1  & \;\,5,120$\pm$5,128 & \;\;\;\,2$\pm$7 \\
ht\_chantry & 141x162 & 1,980$\pm$3,198 & \;\;6$\pm$11 & \;\,6,836$\pm$7,102 & \;\,11$\pm$27 \\
maze-128-128-10 & 128x128 & 3,183$\pm$7,640 & 26$\pm$49 & 17,635$\pm$9,240 & 110$\pm$130 \\
random-32-32-10 & 32x32 & \;\;\;\;\,16$\pm$4 & \;\;0$\pm$5 & \;\,1,445$\pm$1,421 & \;\;\;\,5$\pm$16 \\
room-64-64-8 & 64x64 & 4,689$\pm$5,005 & \;\;5$\pm$12 & 21,463$\pm$9,907 & \;\,33$\pm$78 \\

\end{tabular}
}
\caption{Mean and standard deviation of total runtime and portion of runtime attributed to TRD duplicate detection in milliseconds.}
\label{tab:time}
\end{table}

\section{Conclusion}

We have introduced temporally-relative duplicate pruning (TRDP), a technique for decoupled MAPF and MAPF\textsubscript{Q} algorithms for guaranteeing completeness. TRDP solves a long-standing problem for CBS by making CBS natively complete. We have also shown theoretically and empirically that TRDP has desirable properties for MAPF algorithms, namely, increased efficiency for agent-dense settings and completeness guarantees. TRDP adds these benefits while adding no significant computational overhead.

\bibliography{main}

\end{document}